\documentclass[journal]{IEEEtran}

\usepackage{cite}
\usepackage{amsmath,amssymb,amsfonts,bm}
\usepackage{graphicx}
\usepackage{textcomp}
\usepackage{booktabs}
\usepackage{multirow}
\usepackage{makecell}
\usepackage{array}
\usepackage{tabularx}
\usepackage{placeins}
\usepackage{microtype}
\usepackage{url}
\usepackage{hyperref}

\hypersetup{hidelinks}
\graphicspath{{image/}}
\newcolumntype{Y}{>{\raggedright\arraybackslash}X}

\newcommand{\safeincludegraphics}[2][]{%
  \IfFileExists{#2}{\includegraphics[#1]{#2}}{%
    \IfFileExists{image/#2}{\includegraphics[#1]{#2}}{%
      \fbox{%
        \begin{minipage}[c][0.22\textheight][c]{0.95\linewidth}
        \centering
        \texttt{\detokenize{#2}}\\[0.5em]
        \small Placeholder figure. Replace this box with the final vector or raster figure.
        \end{minipage}%
      }%
    }%
  }%
}

\newcommand{\I}{\mathbf{I}}

\newcommand{\Yzero}{\mathbf{Y}_{0}}
\newcommand{\Yt}{\mathbf{Y}_{t}}
\newcommand{\Yhat}{\hat{\mathbf{Y}}_{0}}
\newcommand{\Zzero}{\mathbf{Z}_{0}}

\newcommand{\eps}{\boldsymbol{\epsilon}}
\newcommand{\etaNoise}{\boldsymbol{\eta}}

\newcommand{\auditname}[1]{\ifmmode\text{#1}\else #1\fi}

\newcommand{\randomauditfull}{Noisy-State Randomization Audit}
\newcommand{\pairingauditfull}{Cross-Case Pairing Audit}
\newcommand{\corecounterfactualfull}{Image-Only Core Counterfactual}
\newcommand{\randyt}{Random-\ensuremath{Y_t}}
\newcommand{\shuffleyt}{Shuffle-\ensuremath{Y_t}}
\newcommand{\corenodiff}{Core-No-Diff}

\newcommand{\pathSegTargeted}{Seg-Targeted}

\newcommand{\figref}[1]{Fig.~\ref{#1}}
\newcommand{\tabref}[1]{Table~\ref{#1}}
\newcommand{\paperref}[3]{#1~\cite{#3}}
\newcommand{\methodMedSegDiffVTwo}{\paperref{MedSegDiff-V2}{https://doi.org/10.1609/aaai.v38i6.28418}{wu2023medsegdiffv2}}
\newcommand{\methodUniSegDiff}{\paperref{UniSegDiff}{https://doi.org/10.1007/978-3-032-04937-7_63}{hu2025unisegdiff}}

\newcommand{\methodSSB}{\paperref{SSB}{https://doi.org/10.1007/978-3-032-04965-0_3}{baru2025ssb}}
\newcommand{\methodEIDiffSeg}{\paperref{EIDiffSeg}{https://doi.org/10.1016/j.knosys.2024.112426}{xia2024eidiffseg}}
\newcommand{\methodMoDiff}{\paperref{MoDiff}{https://doi.org/10.1007/978-3-032-04947-6_37}{ahn2025modiff}}
\newcommand{\methodCIMD}{\paperref{CIMD}{https://doi.org/10.1109/CVPR52729.2023.01110}{rahman2023ambiguous}}
\newcommand{\methodLDSeg}{\paperref{LDSeg}{https://doi.org/10.1016/j.bspc.2025.109380}{zaman2026ldseg}}
\newcommand{\methodSDSeg}{\paperref{SDSeg}{https://doi.org/10.1007/978-3-031-72111-3_62}{lin2024sdseg}}
\newcommand{\methodTSLDSeg}{\paperref{TSLDSeg}{https://doi.org/10.1016/j.patcog.2025.112795}{yang2026tsldseg}}
\newcommand{\methodEnsemDiff}{\paperref{EnsemDiff}{https://proceedings.mlr.press/v172/wolleb22a.html}{wolleb2022ensemdiff}}
\newcommand{\methodMedSegDiffVOne}{\paperref{MedSegDiff-V1}{https://proceedings.mlr.press/v227/wu24a.html}{wu2023medsegdiff}}
\newcommand{\methodBerDiff}{\paperref{BerDiff}{https://doi.org/10.1007/978-3-031-43901-8_47}{chen2023berdiff}}
\newcommand{\methodRetiDiff}{\paperref{RetiDiff}{https://doi.org/10.1007/978-3-032-04937-7_49}{li2025retidiff}}
\newcommand{\methodACPDiff}{\paperref{ACP-Diff}{https://doi.org/10.1007/978-3-031-43901-8_52}{amit2023acpdiff}}
\newcommand{\methodDiffMedSeg}{\paperref{Diff-MedSeg}{https://doi.org/10.1109/TMM.2026.3702543}{liu2026diffmedseg}}
\newcommand{\methodSegDiff}{\paperref{SegDiff}{https://doi.org/10.48550/arXiv.2112.00390}{amit2021segdiff}}

\newcommand{\methodCDAL}{\paperref{cDAL}{https://doi.org/10.1007/978-3-031-72114-4_20}{hejrati2024cdal}}

\newcommand{\methodLEAF}{\paperref{LEAF}{https://doi.org/10.1007/978-3-032-04978-0_37}{huang2025leaf}}
\newcommand{\methodColdSegDiff}{\paperref{Cold SegDiff}{https://doi.org/10.1016/j.knosys.2024.112350}{yan2024coldsegdiff}}
\newcommand{\methodRecycledDiff}{\paperref{RecycledDiff}{https://doi.org/10.59275/j.melba.2023-fbe4}{fu2023recycling}}
\newcommand{\methodMambaDiff}{\paperref{MambaDiff}{https://doi.org/10.1109/TIP.2025.3607615}{liu2025mambadiff}}
\newcommand{\methodRRDiff}{\paperref{RR-Diff}{https://doi.org/10.1109/TMI.2024.3494762}{guo2025rrdiff}}

\newtheorem{theorem}{Theorem}[section]
\newtheorem{lemma}[theorem]{Lemma}
\newtheorem{proposition}[theorem]{Proposition}

\newenvironment{proof}{\begin{IEEEproof}}{\end{IEEEproof}}

\title{Rethinking Diffusion Segmentation: When Does It Rely on Its Noisy State, and Does Diffusion Matter?}

\author{Hengzhuo~Yang, Yuming~Zeng, Yuling~Yang
\thanks{(Yuming Zeng and Yuling Yang contributed equally to this work.) (Corresponding author: Hengzhuo Yang.)}
\thanks{H. Yang is with the Department of Mathematics, and Y. Zeng and Y. Yang are
with the Department of Mechanical and Industrial Engineering, Northeastern
University, Boston, MA 02115 USA. (e-mail: yang.hengz@northeastern.edu; zeng.yumi@northeastern.edu; yang.yulin@northeastern.edu)}
}
\date{}
\begin{document}
\bstctlcite{tip:BSTcontrol}
\maketitle

\begin{abstract}

Diffusion models are increasingly adapted from generation to conditional prediction, where a conditioning signal is combined with an evolving noisy representation of the target. In fully supervised segmentation, however, the conditioning image can already support direct target prediction, so endpoint performance alone establishes neither reliance on the added diffusion state nor a deterministic advantage over image-only prediction. For state reliance, we disrupt target-derived state content or correct image-state pairing during retraining of twelve published methods across three datasets, with ten matched seeds per setting. All 40 original-method comparisons whose evaluated-mask routes remained downstream of noised-quantity reconstruction exhibited state reliance, whereas all 30 comparisons with a segmentation-supervised bypass preserved reference performance. Rerouting five originally bypass-capable methods by forcing segmentation supervision through noise-to-mask reconstruction converted all 30 corresponding comparisons from preserved performance to state reliance. For deterministic utility, matched image-only counterparts achieved similar or better performance in 28 of 35 settings overall, including 16 of 20 whose native methods relied on both audited state properties. These results identify supervision path as a determinant of state reliance in the audited methods. Separately, matched image-only counterparts show that diffusion-specific computation often provides no deterministic endpoint advantage, including in methods that rely on the audited state properties. More generally, when conditioning already supports strong target prediction, diffusion-specific claims require additional evidence that the added state is used and that diffusion-specific computation improves the claimed capability beyond a matched condition-only counterpart.

\end{abstract}

\begin{IEEEkeywords}
Diffusion models, image segmentation, state reliance, supervision paths, model auditing.
\end{IEEEkeywords}

\section{Introduction}

Diffusion-based methods have gained increasing attention in fully supervised image segmentation, particularly in medical imaging, where many systems report strong segmentation performance~\cite{amit2021segdiff,wolleb2022ensemdiff,wu2023medsegdiff,chen2023berdiff}. Yet these methods introduce a noisy target state and diffusion timestep into a task for which conventional segmentors already predict the clean segmentation directly from the case-specific image~\cite{ronneberger2015unet,isensee2021nnunet}. Direct clean-target prediction is itself a valid diffusion parameterization~\cite{ho2020ddpm} and can be effective in class-conditional generation~\cite{li2025backtobasics}, but segmentation creates a distinct attribution problem because the conditioning image is spatially aligned with the target mask. In this setting, a diffusion-style predictor can, in principle, obtain accurate segmentations by relying primarily on the image while ignoring the noisy target and timestep, as illustrated in \figref{fig:method_contrast}. Consequently, strong endpoint performance alone establishes neither reliance on the diffusion pathway nor a diffusion-specific advantage over image-only prediction.

\begin{figure}[!t]
\centering
\safeincludegraphics[width=\columnwidth]{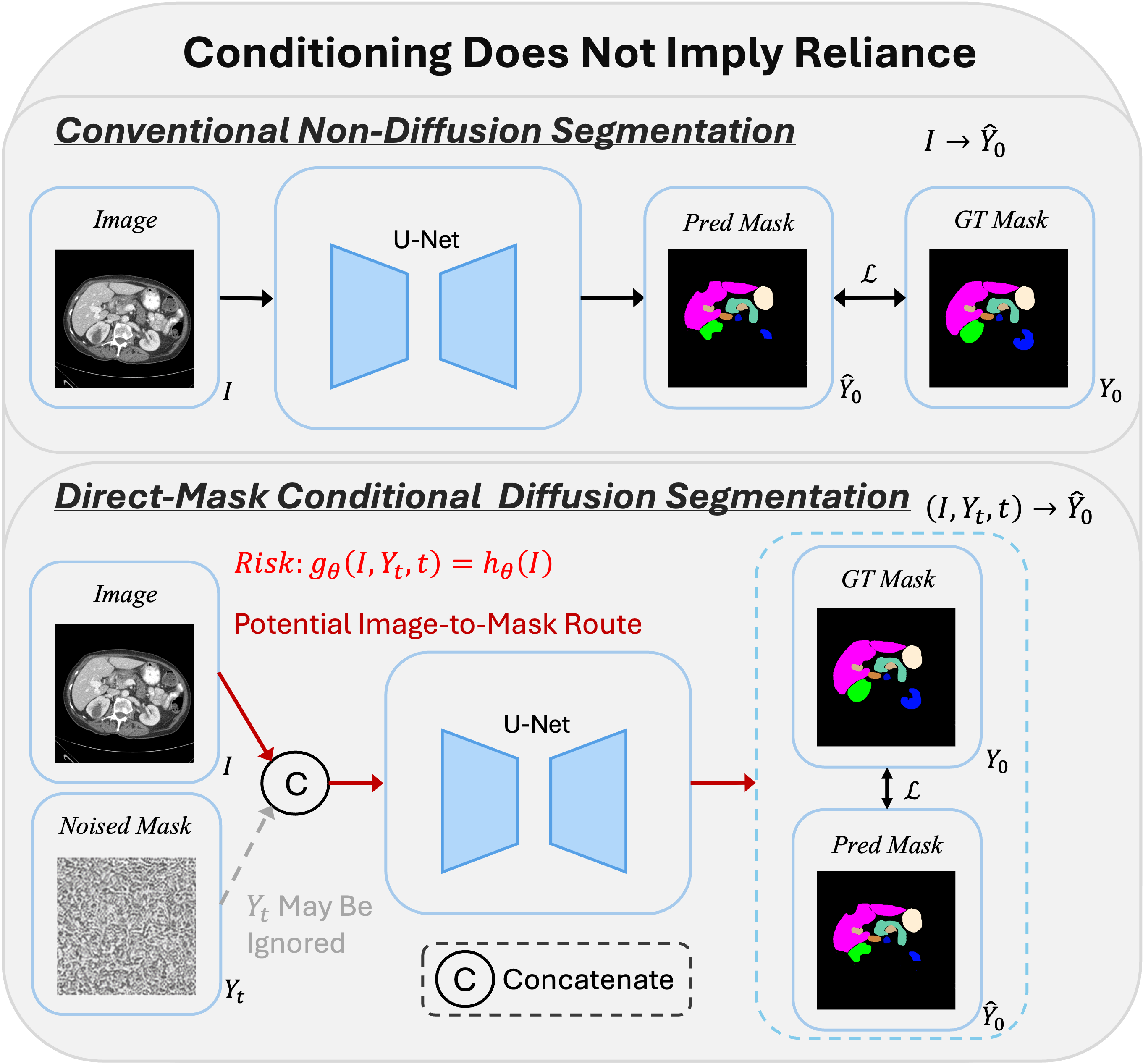}
\caption{Conceptual attribution gap in fully supervised direct-mask diffusion segmentation. Conventional non-diffusion segmentation learns \(\I\!\rightarrow\!\Yhat\), whereas the diffusion-style predictor additionally receives the target-derived noisy state \(\Yt\) and timestep \(t\) during training. The highlighted \(\I\!\rightarrow\!\Yhat\) route denotes a structurally available image-only solution when supported by the supervision path; endpoint performance alone establishes neither learned state reliance nor a deterministic advantage from diffusion-specific computation.}
\label{fig:method_contrast}
\end{figure}

We first ask what determines state reliance. Existing methods differ not only in whether they predict a clean mask, a noised quantity, or both, but also in where segmentation supervision enters and how the evaluated mask is recovered. Some contain a segmentation-supervised route that reaches the evaluated mask without passing through noised-quantity reconstruction, which we refer to as a bypass route. In others, no such bypass exists and the evaluated mask is obtained through noised-quantity reconstruction. Whether such a bypass exists cannot be inferred from the directly predicted quantity or reported loss terms alone. We formalize this structural distinction as Path Category and hypothesize that bypass-capable methods will recover reference performance after retraining under state disruption, whereas methods whose evaluated-mask route requires an accurate noised-quantity prediction will not. We further provide a theoretical analysis of the relationship between supervision path and state reliance.

A fixed-checkpoint contrast provides a preliminary manifestation of this
predicted difference. \methodLEAF{} directly predicts a latent mask, whereas \methodMedSegDiffVOne{} recovers its evaluated mask through a noise-prediction route. Replacing the initial Gaussian state with spatial constants produces much smaller changes for \methodLEAF{} but substantial Dice losses for \methodMedSegDiffVOne{} across three datasets (\figref{fig:motivation_probe}). This contrast suggests path-dependent sensitivity, but a fixed-checkpoint intervention conflates reliance on state information with sensitivity to an altered inference input because the model cannot adapt to the replacement. We therefore move the intervention to training and introduce two retraining-based state audits. The \randomauditfull{} (\randyt{}) replaces the supplied state with independent Gaussian noise, removing target-derived state content while also changing the supplied-state distribution. The
\pairingauditfull{} (\shuffleyt{}) instead constructs the state from another case's mask, retaining method-native mask-derived structure while breaking image-state correspondence.

\begin{figure}[!t]
\centering
\safeincludegraphics[width=\columnwidth]{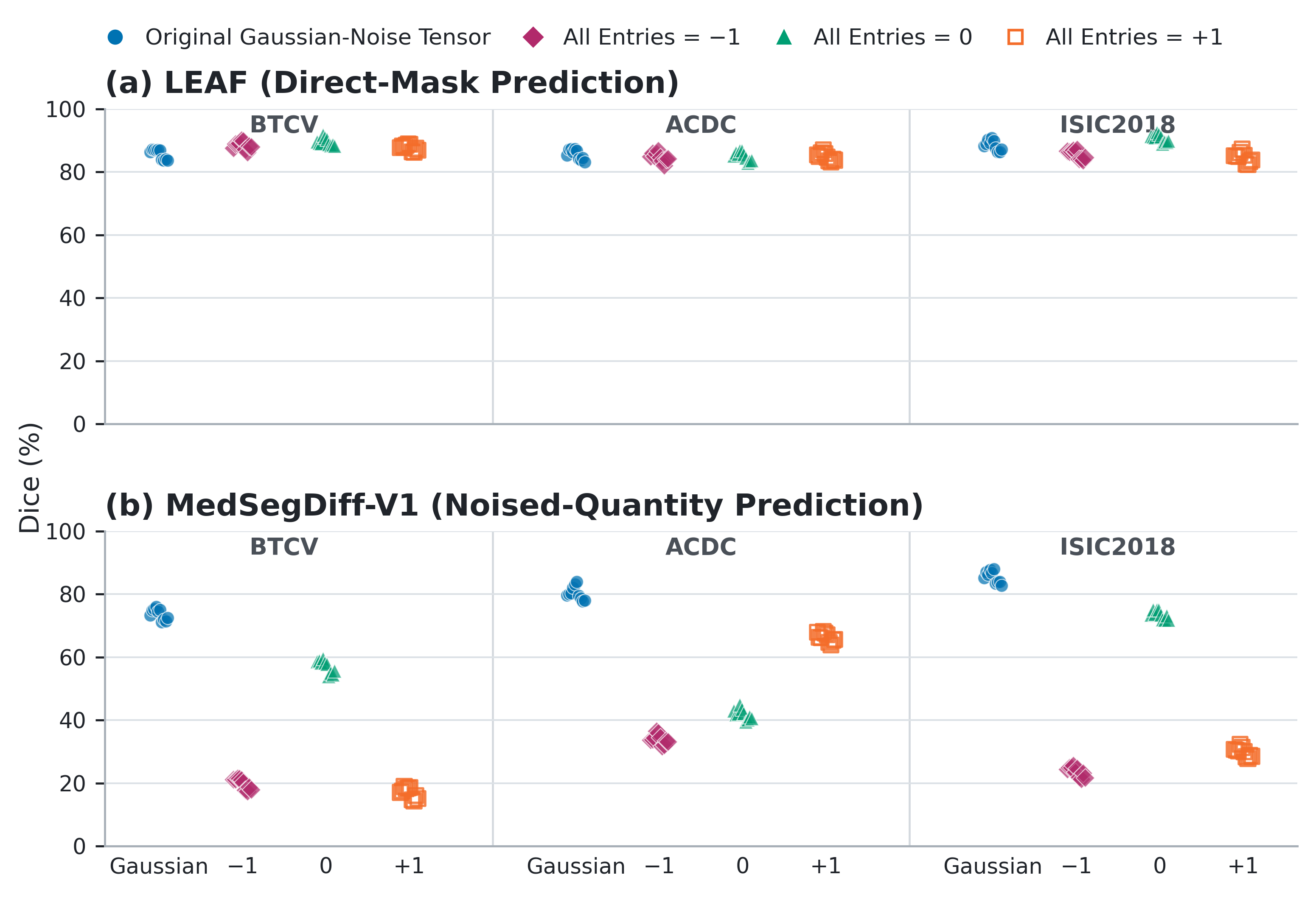}
\caption{Fixed-checkpoint responses of \methodLEAF{} and \methodMedSegDiffVOne{} to initial noisy-state replacement on BTCV~\cite{landman2015btcv}, ACDC~\cite{bernard2018acdc}, and ISIC2018~\cite{codella2019isic2018}. Each point reports test-set Dice from one of ten independently trained checkpoints for the corresponding method-dataset pair. The initial Gaussian state is replaced with spatially constant \(-1\), \(0\), and \(+1\) tensors while all remaining inference operations are held fixed.}
\label{fig:motivation_probe}
\end{figure}

To test whether state reliance changes with supervision path, we reroute five originally bypass-capable methods by requiring segmentation supervision to reach the evaluated mask through noise-to-mask reconstruction and repeat both state audits. If supervision path governs state reliance, rerouting should induce state reliance in methods that were originally bypass-capable.

We separately ask whether diffusion-specific computation provides a deterministic performance advantage over image-only prediction. For each reproduced method, we construct a \corecounterfactualfull{} (\corenodiff{}) by removing noisy-state and timestep inputs and the reverse process while retaining method-compatible image-processing components. For native noised-quantity formulations, the image-only conversion includes compatible changes to the prediction head or supervision. \corenodiff{} then measures the deterministic endpoint performance recoverable from the image alone.

We classify 23 methods by their documented supervision paths and audit twelve methods on BTCV~\cite{landman2015btcv}, ACDC~\cite{bernard2018acdc}, and ISIC2018~\cite{codella2019isic2018}, with ten seed-matched training runs per method-dataset setting. Across the original methods, state-audit responses separated completely by Path Category: every bypass-absent comparison met the state-reliance criterion, whereas every bypass-capable comparison preserved reference performance. Noise-first rerouting reproduced the predicted Preserved-to-SR transition in every modified comparison. The deterministic-utility results followed a different pattern:
\corenodiff{} was Similar or Better in 28 of 35 method-dataset settings,
including 16 of the 20 settings whose native methods relied on both audited state properties. These findings support supervision path as a determinant of state reliance in the audited methods and show that state reliance and deterministic endpoint recoverability are empirically distinct.

\section{Related Work}

\noindent\textbf{Diffusion Models for Image Segmentation.}
Diffusion-based segmentation adapts conditional denoising models by corrupting a representation of the segmentation target and conditioning reverse prediction on the input image. Early methods such as SegDiff, EnsemDiff, and MedSegDiff-V1 used Gaussian diffusion over segmentation masks and iterative reverse refinement~\cite{amit2021segdiff,wolleb2022ensemdiff,wu2023medsegdiff}. Repeated stochastic sampling has served different purposes within this family: EnsemDiff uses multiple diffusion samples as an implicit ensemble, whereas CIMD models ambiguous segmentation by generating multiple plausible masks~\cite{wolleb2022ensemdiff,rahman2023ambiguous}. Subsequent work diversified the state space and corruption process. BerDiff formulated segmentation with Bernoulli diffusion for binary masks~\cite{chen2023berdiff}, while latent approaches such as SDSeg, LDSeg, and TSLDSeg operate on compressed segmentation representations, with SDSeg and LDSeg additionally targeting faster sampling~\cite{lin2024sdseg,zaman2026ldseg,yang2026tsldseg}. These developments establish diffusion segmentation as a family of conditional prediction procedures that differ in target representation, corruption process, and reverse-sampling design.

\noindent\textbf{Prediction Parameterizations and Segmentation Supervision.}
Prediction parameterization is a general design choice in diffusion models. Diffusion formulations admit parameterizations based on the added noise or the clean target~\cite{ho2020ddpm}, and Li and He more recently studied direct clean-data prediction in high-dimensional pixel-space generation, showing that it can be effective under constrained network capacity~\cite{li2025backtobasics}. Both forms appear in diffusion segmentation. SegDiff, EnsemDiff, and MedSegDiff-V1 supervise noise predictions from which clean segmentation estimates are recovered~\cite{amit2021segdiff,wolleb2022ensemdiff,wu2023medsegdiff}. By contrast, cDAL directly predicts and supervises a clean mask, while LEAF applies segmentation supervision to a directly predicted clean-mask latent~\cite{hejrati2024cdal,huang2025leaf}. Several published formulations combine explicit noised-quantity supervision with segmentation-related supervision. UniSegDiff supervises separate noise- and mask-prediction decoders, whereas TSLDSeg applies latent-mask supervision to a clean latent reconstructed from predicted noise~\cite{hu2025unisegdiff,yang2026tsldseg}; MedSegDiff-V2 and SDSeg likewise combine denoising with segmentation-oriented losses~\cite{wu2023medsegdiffv2,lin2024sdseg}. Across these formulations, the predicted quantity, the loss inventory, and where segmentation supervision is applied are related but distinct design choices.

\noindent\textbf{Training-State Construction and Train-Inference Alignment.} 
A separate line of work examines how the noisy segmentation state is constructed during training and how that construction aligns with inference. Standard denoising training forms \(\Yt\) by corrupting the reference mask, whereas inference begins from noise and subsequently operates on model-generated reverse states. Fu et al.\ analyze this mismatch and introduce recycling, in which a segmentation is first predicted for the same image and then passed through the forward process to construct the noisy state used for denoising training~\cite{fu2023aligning,fu2023recycling}. The expanded study evaluates recycling across multiple 2D and 3D medical-imaging datasets and both DDPM and DDIM sampling, reporting improvements over standard diffusion training and related strategies together with more stable inference behavior~\cite{fu2023recycling}. These studies establish training-state construction and train-inference alignment as substantive design dimensions of diffusion segmentation.

\noindent\textbf{Analyses and Non-Diffusion Controls.}
Recent work has also analyzed diffusion segmentation through simplified or non-diffusion counterparts. Fu et al.\ compared their recycling-based direct-mask diffusion model with a matched image-only counterpart that removed the noisy-mask input, timestep, and reverse sampler while retaining a matched U-Net-based framework, reporting broadly comparable deterministic segmentation performance~\cite{fu2023recycling}. Öttl et al.\ analyzed how diffusion segmentation differs from diffusion image generation and retrained EnsemDiff, SegDiff, and MedSegDiff-V1 to directly predict segmentation masks. These variants removed reverse sampling but continued to receive a random tensor at the original noisy-state input~\cite{ottl2024diffseg}.

\section{Mechanism-Attribution Framework}\label{sec:audit}

This section defines the supervision-path taxonomy, audit protocol and decision criteria, introduces the within-method noise-first intervention, and develops the formal rationale for the route-level predictions tested below.

\subsection{Supervision-Path Taxonomy}
\label{sec:prediction_supervision_paths}

We define Path Category by tracing each loss-supported prediction route to the evaluated mask. The primary distinction is whether segmentation-related supervision can reach the evaluated mask through a clean-target route that bypasses noised-quantity reconstruction. Here, the clean target denotes either a pixel-space mask or a deterministic clean-mask representation in latent space. The operational assignment procedure is summarized in \figref{fig:path_category_assignment}.

\begin{figure}[!t]
\centering
\safeincludegraphics[width=\columnwidth]{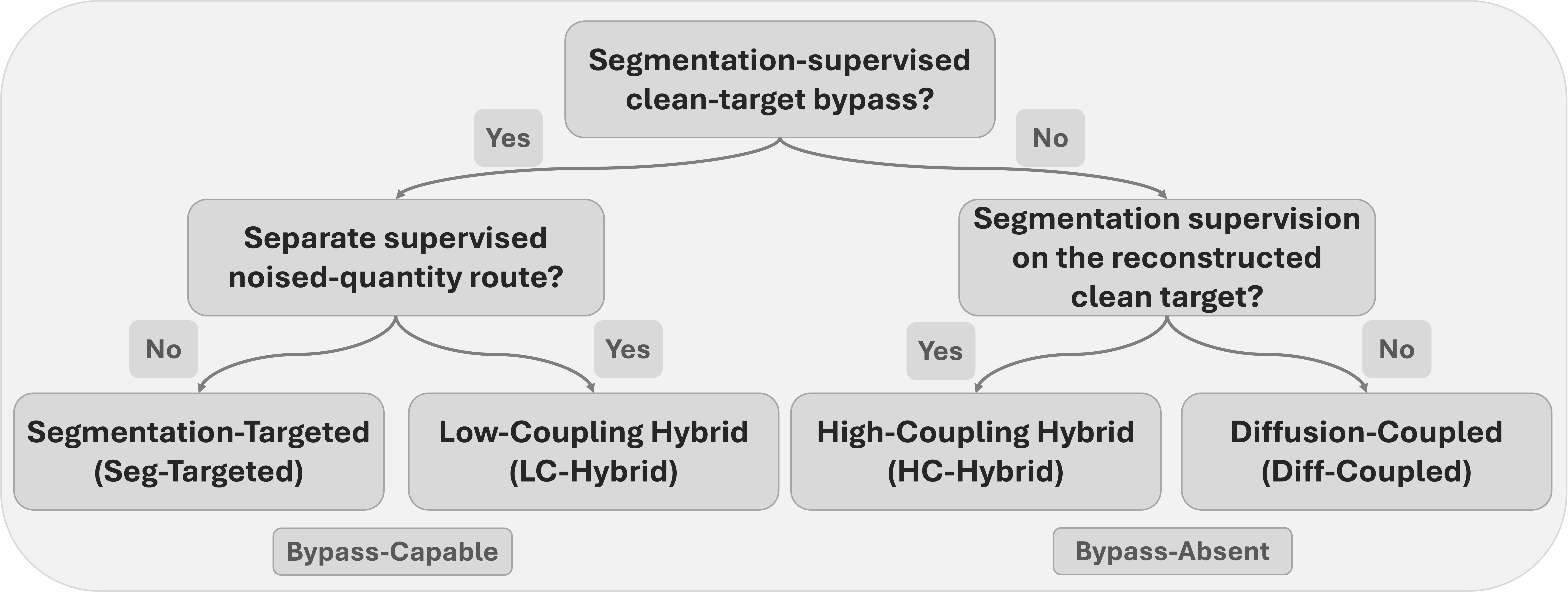}
\caption{Operational assignment of Path Category from documented loss-supported routes to the evaluated mask. Detailed route structures are provided in \figref{fig:prediction_supervision_routes} in Appendix~\ref{app:taxonomy_details} and relative documentation is reported in \tabref{tab:literature_taxonomy}.}
\label{fig:path_category_assignment}
\end{figure}

\noindent\textbf{Segmentation-Targeted (Seg-Targeted).}
Seg-Targeted methods apply segmentation-related supervision to a directly predicted clean target that yields the evaluated mask without passing through noised-quantity reconstruction. They contain no separately supervised noised-quantity route from which the evaluated mask is obtained.

\noindent\textbf{Low-Coupling Hybrid (LC-Hybrid).}
LC-Hybrid methods combine a separately supervised noised-quantity route with a segmentation-supervised clean-target bypass. The evaluated mask can therefore be obtained from the directly predicted clean target without first passing through noised-quantity reconstruction.

\noindent\textbf{High-Coupling Hybrid (HC-Hybrid).}
HC-Hybrid methods contain no segmentation-supervised clean-target bypass. The predictor instead outputs a noised quantity, which is combined with the supplied noisy state through the reconstruction equation to obtain the clean target. Segmentation-related supervision is applied only to the reconstructed clean target.

\noindent\textbf{Diffusion-Coupled (Diff-Coupled).}
Diff-Coupled methods likewise contain no segmentation-supervised clean-target bypass. They supervise a noised quantity from which the evaluated mask is reconstructed, but apply no segmentation-related supervision to the reconstructed clean target.

In the category names, Hybrid denotes the coexistence of a noised-quantity prediction path and segmentation-related supervision. Based on the primary split in \figref{fig:path_category_assignment}, we refer to Seg-Targeted and LC-Hybrid as bypass-capable and to HC-Hybrid and Diff-Coupled as bypass-absent. \tabref{tab:literature_taxonomy} applies these assignment rules to 23 reviewed diffusion-segmentation methods and reports their Loss Families, Directly Predicted Quantities, Path Categories, and classification evidence. Detailed route structures underlying the four categories are provided in \figref{fig:prediction_supervision_routes} in Appendix~\ref{app:taxonomy_details}. The audits introduced next test whether this primary structural distinction corresponds to distinct learned responses to state disruption.

\begin{table*}[!t]
\centering
\small
\setlength{\tabcolsep}{1.5pt}
\renewcommand{\arraystretch}{0.97}
\caption{Supervision-path taxonomy of 23 reviewed diffusion-segmentation methods. Boldface marks the twelve methods evaluated experimentally. Classification Evidence identifies the public source supporting each assignment.}
\label{tab:literature_taxonomy}
\begin{tabularx}{\textwidth}{@{\extracolsep{\fill}}lllll@{}}
\toprule
Method & Loss Family & Directly Predicted Quantity & Path Category & Classification Evidence \\
\midrule
\mbox{\textbf{\methodUniSegDiff}} & Noise + Mask Loss & Noise + Mask & LC-Hybrid & Code + Paper \\
\mbox{\textbf{\methodSSB}} & Noise + Mask Loss & Noise + Mask & LC-Hybrid & Paper only, no code available \\
\midrule
\mbox{\textbf{\methodMedSegDiffVTwo}} & Noise + Mask Loss & Noise & HC-Hybrid & Code + Paper \\
\mbox{\textbf{\methodLDSeg}} & Noise + Mask Loss & Noise & HC-Hybrid & Code + Paper \\
\mbox{\textbf{\methodSDSeg}} & Noise + Latent-Mask Loss & Noise & HC-Hybrid & Code + Paper \\
\mbox{\textbf{\methodTSLDSeg}} & Noise + Latent-Mask Loss & Noise & HC-Hybrid & Code + Paper \\
\midrule
\mbox{\textbf{\methodEnsemDiff}} & Noise + VLB Loss & Noise & Diff-Coupled & Code + Paper \\
\mbox{\textbf{\methodMedSegDiffVOne}} & Noise Loss & Noise & Diff-Coupled & Code + Paper \\
\mbox{\textbf{\methodBerDiff}} & Noise + KL Posterior Loss & Noise & Diff-Coupled & Code + Paper \\
\mbox{\methodACPDiff} & Noise Loss & Noise & Diff-Coupled & Code + Paper \\
\mbox{\methodSegDiff} & Noise Loss & Noise & Diff-Coupled & Code + Paper \\
\mbox{\methodCIMD} & Noise + VLB + Latent-KL Posterior Loss & Noise & Diff-Coupled & Code + Paper \\
\mbox{\methodEIDiffSeg} & Noise Loss & Noise & Diff-Coupled & Code + Paper \\
\mbox{\methodMoDiff} & Noise + morphology + Attention Loss & Noise & Diff-Coupled & Paper only, no code available \\
\mbox{\methodDiffMedSeg} & Noise Loss & Noise & Diff-Coupled & Paper only, no code available \\
\mbox{\methodRetiDiff} & Noise Loss & Noise & Diff-Coupled & Paper only, no code available \\
\midrule
\mbox{\textbf{\methodLEAF}} & Latent-Mask + Distillation Loss & Mask & \pathSegTargeted{} & Code + Paper \\
\mbox{\textbf{\methodCDAL}} & Mask Loss & Mask & \pathSegTargeted{} & Code + Paper \\
\mbox{\textbf{\methodColdSegDiff}} & Mask Loss & Mask & \pathSegTargeted{} & Code + Paper \\
\mbox{\methodRecycledDiff} & Mask Loss & Mask & \pathSegTargeted{} & Code + Paper \\
\mbox{Diff-Unet~\cite{xing2023diffunet}} & Mask + Boundary Loss & Mask & \pathSegTargeted{} & Code + Paper \\
\mbox{\methodMambaDiff} & Mask Loss & Mask & \pathSegTargeted{} & Code + Paper \\
\mbox{\methodRRDiff} & Mask Loss & Mask & \pathSegTargeted{} & Code + Paper \\
\bottomrule
\end{tabularx}
\end{table*}

\subsection{Audit Protocol and Decision Criteria}
\label{sec:audit_protocol}

\figref{fig:three_audits} summarizes three audits, each trained separately and compared with Full, the corresponding reproduced method in its native formulation. For case \(i\), the supplied state is
\begin{equation}
    \mathbf Y_{t_i,i}^{(\mathrm{Full})}
    =
    q_{t_i}^{\mathrm{orig}}
    \left(\Yzero^{(i)};\boldsymbol{\xi}^{(i)}\right).
    \label{eq:audit_full_state}
\end{equation}
Here \(\boldsymbol{\xi}^{(i)}\) collects the method-specific noised-quantity
variables. Gaussian mask diffusion instantiates this state as
\begin{equation}
\begin{aligned}
    \mathbf Y_{t_i,i}^{(\mathrm{Full})}
    & =
    a_{t_i}\Yzero^{(i)}+b_{t_i}\eps^{(i)},
    \\
    a_t
    & =\sqrt{\bar\alpha_t},
    \quad
    b_t=\sqrt{1-\bar\alpha_t},
    \quad
    \eps^{(i)}\sim\mathcal N(\mathbf 0,\mathbf I_{\mathrm{id}}).
\end{aligned}
    \label{eq:audit_gaussian_state}
\end{equation}

In both state audits, the original training tuple and supervision target are first constructed as in Full. Only the state supplied to the predictor is then replaced. \corenodiff{} instead trains a separate image-only counterpart to measure how much segmentation performance can be recovered from image alone.

\begin{figure*}[t!]
\centering
\safeincludegraphics[width=\textwidth]{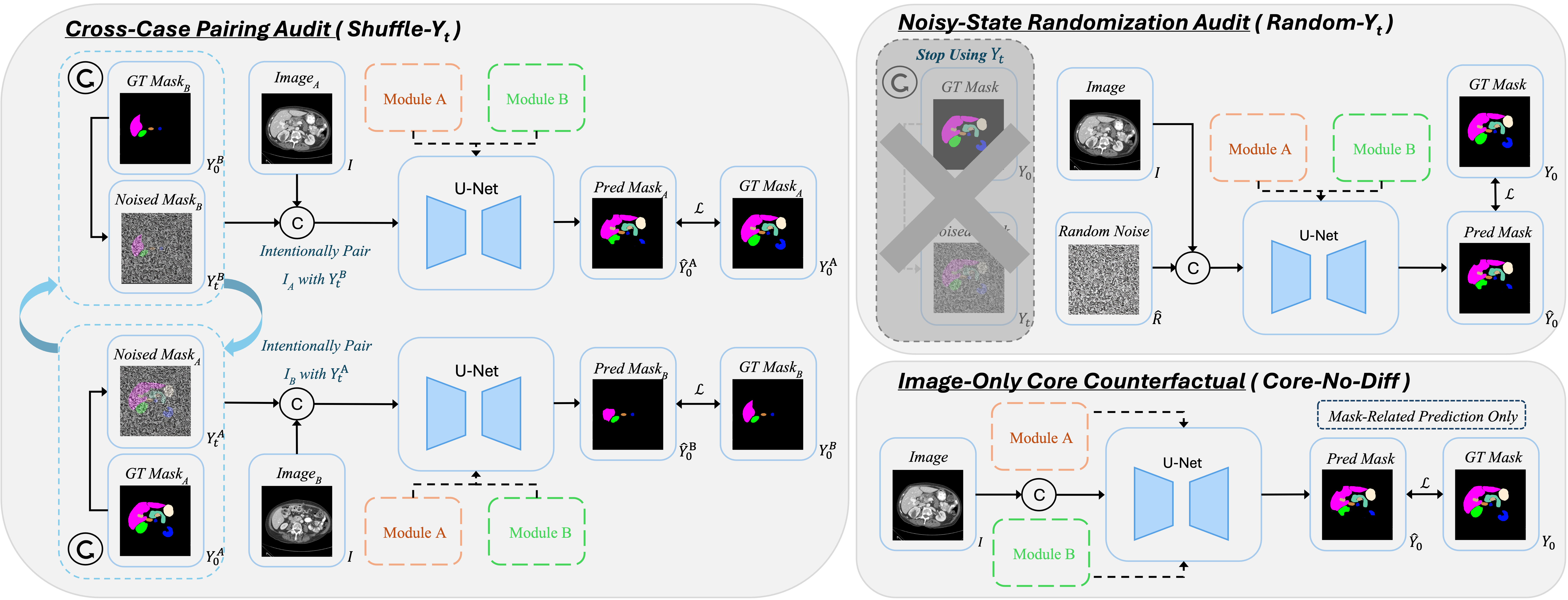}
\caption{Three-part mechanism-attribution protocol. \randyt{} and \shuffleyt{} are separately retrained comparisons that test distinct state properties. \randyt{} replaces the supplied noisy state with independent Gaussian noise, while \shuffleyt{} constructs it from another case's mask. Both retain the image, supervision target and loss, Directly Predicted Quantity, and inference protocol. \corenodiff{} separately trains an image-only capacity counterfactual without diffusion-specific signals or reverse sampling. Mask outputs are schematic.}
\label{fig:three_audits}
\end{figure*}

\subsubsection{\randomauditfull{} (\randyt{})}

The supplied state is replaced with Gaussian noise:
\begin{equation}
    \mathbf Y_{t_i,i}^{(\mathrm{Rand})}
    =
    \etaNoise^{(i)},
    \qquad
    \etaNoise^{(i)}
    \sim\mathcal N(\mathbf 0,\mathbf I_{\mathrm{id}}).
    \label{eq:randyt}
\end{equation}
where \(\etaNoise^{(i)}\) is independent of the original tuple.  The image, timestep convention, Directly Predicted Quantity, original supervision target and loss, architecture, and inference protocol remain unchanged. Random-\(Y_t\) tests whether Full-level performance can be recovered without target-derived information in the supplied state.
\subsubsection{\pairingauditfull{} (\shuffleyt{})}

\shuffleyt{} complements \randyt{} by preserving method-native state structure while breaking its pairing with the current image. Let \(j=\pi(i)\neq i\) be a no-self-match donor assignment. The
supplied state is
\begin{equation}
    \mathbf Y_{t_i,i}^{(\mathrm{Shuf})}
    =
    q_{t_i}^{\mathrm{orig}}
    \left(\Yzero^{(j)};\boldsymbol{\xi}^{(i)}\right),
    \qquad j=\pi(i)\neq i.
    \label{eq:shuffleyt}
\end{equation}
The image, timestep, perturbation variable, Directly Predicted Quantity,
supervision target, and loss remain attached to case \(i\). Donors are
resampled through a no-self-match batch permutation during training. The audit tests whether the method can recover Full-level performance without correct image-state correspondence.

\subsubsection{\corecounterfactualfull{} (\corenodiff{})}

\corenodiff{} removes the noisy state, timestep, and reverse process:
\begin{equation}
    \widehat{\Yzero}_{\mathrm{core}}
    =
    F_{\phi}^{\mathrm{img}}(\I).
    \label{eq:core_no_diff}
\end{equation}
For \pathSegTargeted{} methods, the clean target and compatible mask loss are unchanged. LC-Hybrid conversions retain the direct mask or latent-mask target and compatible loss while removing diffusion-only terms. Diff-Coupled and HC-Hybrid methods require a compatible direct segmentation head and loss. We use Dice with cross-entropy for multiclass tasks and Dice with binary cross-entropy for binary tasks. The resulting comparison assesses whether diffusion-specific computation provides a deterministic segmentation advantage over the corresponding image-only counterpart.

\subsubsection{Decision Criteria}

Every decision is made within a fixed method-dataset setting. Let \(r\) be
Full for original-method audits. The intervention defined in
Section~\ref{sec:noise_first_intervention} uses Modified Full as its reference.
Each condition uses the same ten seed identifiers, and seed is the statistical
unit. For each seed \(s\), let \(\bar m_s(c)\) denote the mean Dice achieved
on the fixed held-out test set by the model trained under condition \(c\),
expressed in percentage points, and define
\begin{equation}
    \Delta_s(c;r)=\bar m_s(c)-\bar m_s(r),
    \qquad
    \delta_r(\kappa)=\kappa\,\mathrm{SD}_r.
    \label{eq:seed_delta}
\end{equation}
where \(\mathrm{SD}_r\) is the sample standard deviation across the ten
reference seeds, computed with denominator \(n-1\). The primary analysis uses
\(\kappa=1.0\), with
\(\kappa\in\{0.5,1.5,2.0\}\) for sensitivity analysis. This margin defines a performance tolerance based on across-seed reference variability.

For each comparison \((c,r)\), the ten matched seeds yield the paired
differences \(\{\Delta_s(c;r)\}_{s=1}^{10}\). Let \(\bar\Delta\) and
\(\mathrm{SD}_\Delta\) denote their mean and sample standard deviation,
respectively. We compute \(\mathrm{SD}_\Delta\) with denominator \(9\), and
define the standard error of the mean difference as
\(\mathrm{SE}_\Delta=\mathrm{SD}_\Delta/\sqrt{10}\). Under a paired \(t\)-analysis of the seed-level differences, the one-sided 95\% bounds are
\begin{equation}
\begin{aligned}
    \mathrm{LCB}_{95}
    & =
    \bar\Delta-t_{0.95,9}\mathrm{SE}_\Delta,
    \\
    \mathrm{UCB}_{95}
    & =
    \bar\Delta+t_{0.95,9}\mathrm{SE}_\Delta.
\end{aligned}
    \label{eq:one_sided_bounds}
\end{equation}
Together, the bounds form a two-sided 90\% confidence interval for the mean paired difference across retraining seeds on the fixed held-out test set. The analysis uses comparison-wise inference without multiplicity adjustment.

For \randyt{} and \shuffleyt{},
\begin{equation}
\begin{cases}
\text{Preserved}, & \mathrm{LCB}_{95}>-\delta_r(\kappa),\\
\text{SR}, & \mathrm{UCB}_{95}<-\delta_r(\kappa),\\
\text{Inconclusive}, & \text{otherwise}.
\end{cases}
\label{eq:yt_decision}
\end{equation}
Preserved is a margin-based non-inferiority decision. We report State Reliance (SR) when the performance loss under an audit is statistically supported to exceed the reference-variability margin, indicating failure to recover Full-level performance after retraining. Under Random-\(Y_t\) and Shuffle-\(Y_t\), this supports reliance on target-derived state content and correct image-state pairing, respectively.

For \corenodiff{},
\begin{equation}
\begin{cases}
\text{Worse}, & \mathrm{UCB}_{95}<-\delta_r(\kappa),\\
\text{Better}, & \mathrm{LCB}_{95}>+\delta_r(\kappa),\\
\text{Similar}, &
\begin{gathered}
\mathrm{LCB}_{95}>-\delta_r(\kappa)\\
\text{and }\mathrm{UCB}_{95}<+\delta_r(\kappa),
\end{gathered}\\
\text{Inconclusive}, & \text{otherwise}.
\end{cases}
\label{eq:core_decision}
\end{equation}
The Similar rule is the confidence-bound form of two one-sided tests at
\(\alpha=0.05\)~\cite{schuirmann1987tost}. These labels describe image-only
capacity and remain separate from SR.

\subsection{Noise-First Supervision-Path Intervention}
\label{sec:noise_first_intervention}

To test whether state reliance changes with supervision path, we reroute five originally bypass-capable methods by requiring segmentation supervision to reach the evaluated mask only through noise-to-mask reconstruction, and repeat \randyt{} and \shuffleyt{}.

For a modified condition \(c\), let \(\widehat{\eps}^{(c)}\) denote the
predictor's noise output from \((\I,\Yt^{(c)},t)\). The corresponding estimate of the clean target \(\Yzero\), reconstructed from the supplied state at timestep \(t\), is
\begin{equation}
    \widehat{\Yzero}^{\,\mathrm{rec},(c)}
    =
    \frac{\Yt^{(c)}-b_t\widehat{\eps}^{(c)}}{a_t}.
    \label{eq:noise_first_reconstruction}
\end{equation}
The noise loss is evaluated on \(\widehat{\eps}^{(c)}\), and any mask or
latent-mask loss is evaluated only on the resulting reconstruction. Latent
variants apply the same relation in latent-mask space before decoding. This construction instantiates an HC-Hybrid route in \figref{fig:prediction_supervision_routes} with explicit noise supervision and segmentation supervision applied after reconstruction.

For \methodUniSegDiff{} and \methodSSB{}, both originally LC-Hybrid, the intervention relocates existing segmentation supervision from the segmentation-supervised bypass to the reconstruction, providing a direct test of bypass removal. For \methodLEAF{}, \methodColdSegDiff{}, and \methodCDAL{}, all originally \pathSegTargeted{}, the redesign also changes the Directly Predicted Quantity from Mask to Noise, yielding broader noise-first modifications. Accordingly, the five modified variants are treated as HC-Hybrid in the intervention analyses, while \tabref{tab:literature_taxonomy} retains the original method assignments.

Each state audit of a modified variant is evaluated against that variant's Modified Full reference. The intervention test is whether the noise-first rerouting produces a Preserved-to-SR transition under both state audits. Original Full and Modified Full are compared descriptively to report the accompanying change in deterministic performance.

\subsection{Formal Analysis of State Audits and Supervision Paths}
\label{sec:formal_rationale}

Formally, the route-level hypothesis rests on whether an audit-invariant image-only endpoint remains representable and is supported by the documented supervision path. For routes with explicit noised-quantity supervision, we additionally analyze recoverability of that supervised target and its compatibility with clean-target reconstruction under state disruption. Separately, the reconstruction identity characterizes the supplied-state-reliant intermediate required by a reconstruction-coupled route.
Throughout this analysis, \(\Yzero\) denotes the clean reference mask for the
sampled case; \(t\) indexes the training-time supplied state and the corresponding timestep-conditioned predictor call or reconstruction.

Let \(\tau_m(\Yzero)\) denote the segmentation target used in this analysis,
either the clean mask in pixel space or a deterministic encoding of that mask
in a latent segmentation space. A conventional non-diffusion segmentor
minimizes the segmentation risk
\(\mathbb E[\ell_m(h(\I),\tau_m(\Yzero))]\) over image-only predictors
\(h(\I)\). Under the realizable abstraction \(\Yzero=\phi(\I)\) almost surely,
\(h_0=\tau_m\circ\phi\) attains zero segmentation risk when the loss is
nonnegative and vanishes at the correct target
(Proposition~\ref{prop:app_image_only_optimum}).

Relative to this non-diffusion reference, suppose the diffusion-style
direct-mask predictor class contains a common image-only realization satisfying
\[
    g_h(\I,\Yt^{(c)},t)=h(\I),
    \qquad
    c\in\{\mathrm{Full},\mathrm{Rand},\mathrm{Shuf}\}.
\]
Preservation of the joint law of \((\I,\Yzero)\) then gives
\begin{equation}
    \mathcal R_m^{(c)}(g_h)
    =
    \mathbb E\!\left[
      \ell_m\!\left(h(\I),\tau_m(\Yzero)\right)
    \right]
    \label{eq:audit_invariant_segmentation_risk}
\end{equation}
under every condition
(Lemma~\ref{lem:app_image_only_invariance}).
The availability of this solution is category-neutral.

Path Category then determines whether the documented objective provides a segmentation-supervised route that can support the same image-only solution without passing through noised-quantity reconstruction. Seg-Targeted and the LC-Hybrid bypass provide such a route. HC-Hybrid and Diff-Coupled provide no corresponding bypass, although their complete routes may still represent the same solution.

For a pixel-space clean-mask predictor
\(h:\mathcal I\to\mathbb R^d\), solving the Gaussian reconstruction for the
intermediate output shows that realizing \(h(\I)\) from a supplied state
\(\mathbf y\) requires
\(\psi(\I,\mathbf y,t)=(\mathbf y-a_t h(\I))/b_t\).
More importantly, for fixed \((\I,t)\), if two distinct supplied states
\(\mathbf y\neq\widetilde{\mathbf y}\) both reconstruct to the same endpoint
\(h(\I)\), then
\begin{equation}
    \psi(\I,\mathbf y,t)
    -
    \psi(\I,\widetilde{\mathbf y},t)
    =
    \frac{\mathbf y-\widetilde{\mathbf y}}{b_t}
    \neq \mathbf 0.
    \label{eq:image_only_endpoint_through_noise}
\end{equation}
Thus, exact realization of the same image-only endpoint across distinct
supplied states requires a supplied-state-dependent intermediate output; an
intermediate of the form \(v(\I,t)\) cannot realize \(h(\I)\) at both states.

Even when the image determines \(\Yzero\), it does not determine the
independently sampled realization of \(\eps\). Let
\(X_c=(\I,\Yt^{(c)},t)\), and define
\begin{equation}
\begin{aligned}
    \mathcal R_{I,t}
    &:={}
    \inf_f
    \mathbb E\!\left\|f(\I,t)-\eps\right\|^2,\\
    \mathcal R_c
    &:={}
    \inf_{f_c}
    \mathbb E\!\left\|f_c(X_c)-\eps\right\|^2,\\
    \mathcal R_{\mathrm{Full}}=0
    &\le
    \mathcal R_{\mathrm{Shuf}}
    <
    \mathcal R_{\mathrm{Rand}}
    =\mathcal R_{I,t}=d.
\end{aligned}
    \label{eq:audit_noise_risks}
\end{equation}
Here the infima range over square-integrable predictors, and \(d\) is the
noise dimension under unnormalized squared error.
Propositions~\ref{prop:app_full_information}
and~\ref{prop:app_gaussian_risks} establish this ordering. In particular,
under the realizable abstraction, the matched Full inputs determine the
realized noise exactly as
\(\eps=(\Yt^{(\mathrm{Full})}-a_t\phi(\I))/b_t\).
Random adds no information about the retained noise beyond \((\I,t)\).
Shuffle retains information because its state contains the current \(\eps\),
while uncertainty about the donor mask can leave irreducible prediction error.

The final step connects the retained noise target to clean-target recovery.
For a noise prediction \(\widehat{\eps}^{(c)}\), subtracting the Full relation
from the reconstruction under condition \(c\) gives
\begin{equation}
    \widehat{\Yzero}^{\,\mathrm{rec},(c)}-\Yzero
    =
    \frac{\Yt^{(c)}-\Yt^{(\mathrm{Full})}}{a_t}
    +
    \frac{b_t}{a_t}
    \left(\eps-\widehat{\eps}^{(c)}\right).
    \label{eq:audit_reconstruction_error}
\end{equation}
Equation~\eqref{eq:audit_reconstruction_error} makes the compatibility
condition explicit:
\begin{equation}
    \widehat{\eps}^{(c)}=\eps
    \quad\text{and}\quad
    \widehat{\Yzero}^{\,\mathrm{rec},(c)}=\Yzero
    \quad\Longleftrightarrow\quad
    \Yt^{(c)}=\Yt^{(\mathrm{Full})}.
    \label{eq:noise_mask_compatibility}
\end{equation}
In Full, the state-mismatch term vanishes, so accurate retained-noise prediction is compatible with accurate reconstruction of \(\Yzero\). Under an audit, exact retained-noise prediction instead leaves the state mismatch in
Eq.~\eqref{eq:audit_reconstruction_error}.

For Shuffle,
\(\Yt^{(\mathrm{Shuf})}=a_t\Yzero'+b_t\eps\), so exact retained-noise
prediction reconstructs the donor mask:
\[
    \widehat{\eps}^{(\mathrm{Shuf})}=\eps
    \quad\Longrightarrow\quad
    \widehat{\Yzero}^{\,\mathrm{rec},(\mathrm{Shuf})}=\Yzero'.
\]
When \(\Yzero'\neq\Yzero\), exact reconstruction of the clean target instead
requires the shifted intermediate
\(\widehat{\eps}^{(\mathrm{Shuf})}
=\eps+(a_t/b_t)(\Yzero'-\Yzero)\).
Random produces the corresponding incompatibility with its independent
Gaussian replacement. Proposition~\ref{prop:app_reconstruction_compatibility}
gives the general identity and the condition-specific derivations.

Applied to the four Path Categories, these results give the route-level predictions. In LC-Hybrid and \pathSegTargeted{}, segmentation supervision can directly support the audit-invariant candidate \(g_h=h(\I)\) when that candidate is representable. For Diff-Coupled and HC-Hybrid methods within the Gaussian reconstruction setting, no corresponding segmentation-supervised bypass to the evaluated mask is available, and the retained-noise target is no longer exactly recoverable under state disruption. HC-Hybrid additionally imposes reconstruction-based segmentation supervision, for which retained-noise correctness and clean-target correctness are incompatible under nonzero state mismatch. We therefore predict Preserved responses for LC-Hybrid and \pathSegTargeted{}, and SR responses for Diff-Coupled and HC-Hybrid.

The route-level and intervention hypotheses are evaluated through finite
retraining and the Preserved/SR decision rules. The formal derivations directly apply to the Gaussian \(\epsilon\)-prediction reconstruction considered here. Appendix~\ref{app:theory_experiment_boundary} gives the complete assumptions, method-specific boundaries, and latent-state extension.

\section{Experiments and Results}\label{sec:experiments} 

We report results for twelve methods across BTCV, ACDC, and ISIC2018, with ten seed-matched training runs per condition. Implementation details are provided in Appendix~\ref{app:experimental_setup}.

\subsection{State-Audit Responses Separate by Path Category} 

Original-method responses followed the prespecified Path Category predictions under both \randyt{} and \shuffleyt{}. \tabref{tab:main_all_datasets} reports absolute Dice scores, while \tabref{tab:stat_all_datasets} reports the paired changes and decisions at the primary reference-variability margin.

\begin{table*}[!t]
\centering
\small
\renewcommand{\arraystretch}{0.97}
\caption{Original-method performance on BTCV, ACDC, and ISIC2018. Dice (\%) is reported as mean \(\pm\) SD over ten matched seeds. Full is the reproduced reference. BerDiff/ACDC was unavailable.}
\label{tab:main_all_datasets}
\begin{tabular*}{\textwidth}{@{\extracolsep{\fill}}cll|c|c|c|c@{}}
\toprule
Dataset & Method & Path Category & Full Dice & Random-$Y_t$ Dice & Shuffle-$Y_t$ Dice & Core-No-Diff Dice \\
\midrule
\multirow[c]{12}{*}{\textbf{BTCV}} & \methodUniSegDiff & LC-Hybrid & 85.82$\pm$1.86 & 85.78$\pm$1.94 & 85.80$\pm$1.88 & 85.86$\pm$1.99 \\
& \methodSSB & LC-Hybrid & 84.01$\pm$1.70 & 83.92$\pm$1.74 & 83.94$\pm$1.71 & 83.82$\pm$1.82 \\
& \methodMedSegDiffVTwo & HC-Hybrid & 81.19$\pm$1.75 & 12.24$\pm$2.01 & 11.17$\pm$2.03 & 81.09$\pm$1.82 \\
& \methodSDSeg & HC-Hybrid & 90.56$\pm$1.57 & 77.69$\pm$1.59 & 83.43$\pm$1.57 & 90.55$\pm$1.61 \\
& \methodTSLDSeg & HC-Hybrid & 82.01$\pm$1.60 & 14.28$\pm$1.63 & 19.91$\pm$1.61 & 83.98$\pm$1.63 \\
& \methodLDSeg & HC-Hybrid & 64.26$\pm$2.06 & 19.31$\pm$2.06 & 50.16$\pm$2.06 & 75.67$\pm$2.09 \\
& \methodEnsemDiff & Diff-Coupled & 90.15$\pm$1.60 & 3.95$\pm$1.64 & 24.23$\pm$1.61 & 87.92$\pm$1.68 \\
& \methodMedSegDiffVOne & Diff-Coupled & 73.59$\pm$1.78 & 14.27$\pm$1.83 & 15.05$\pm$1.83 & 78.20$\pm$1.85 \\
& \methodBerDiff & Diff-Coupled & 88.24$\pm$1.55 & 14.92$\pm$1.59 & 34.70$\pm$1.57 & 88.26$\pm$1.59 \\
& \methodLEAF & \pathSegTargeted{} & 85.65$\pm$1.62 & 85.91$\pm$1.62 & 85.71$\pm$1.73 & 85.60$\pm$1.61 \\
& \methodColdSegDiff & \pathSegTargeted{} & 79.40$\pm$2.11 & 82.96$\pm$1.69 & 82.75$\pm$1.70 & 79.98$\pm$1.70 \\
& \methodCDAL & \pathSegTargeted{} & 81.00$\pm$1.54 & 82.64$\pm$1.59 & 81.96$\pm$1.55 & 77.64$\pm$1.57 \\
\cmidrule(lr){1-7}
\multirow[c]{12}{*}{\textbf{ACDC}} & \methodUniSegDiff & LC-Hybrid & 82.76$\pm$1.72 & 82.70$\pm$1.66 & 82.72$\pm$1.80 & 83.10$\pm$1.66 \\
& \methodSSB & LC-Hybrid & 81.29$\pm$1.77 & 81.29$\pm$1.76 & 81.19$\pm$1.69 & 81.13$\pm$1.65 \\
& \methodMedSegDiffVTwo & HC-Hybrid & 72.25$\pm$1.54 & 7.65$\pm$1.81 & 7.33$\pm$1.78 & 72.31$\pm$1.55 \\
& \methodSDSeg & HC-Hybrid & 80.85$\pm$1.75 & 3.07$\pm$1.78 & 4.37$\pm$1.81 & 86.90$\pm$1.82 \\
& \methodTSLDSeg & HC-Hybrid & 77.03$\pm$1.69 & 2.87$\pm$1.74 & 4.00$\pm$1.73 & 85.58$\pm$1.74 \\
& \methodLDSeg & HC-Hybrid & 62.65$\pm$1.60 & 27.32$\pm$1.64 & 39.25$\pm$1.64 & 80.04$\pm$1.65 \\
& \methodEnsemDiff & Diff-Coupled & 89.77$\pm$1.97 & 2.96$\pm$1.58 & 18.36$\pm$1.97 & 89.80$\pm$2.02 \\
& \methodMedSegDiffVOne & Diff-Coupled & 80.28$\pm$2.11 & 3.04$\pm$1.68 & 13.63$\pm$2.14 & 70.56$\pm$1.89 \\
& \methodBerDiff & Diff-Coupled & - & - & - & - \\
& \methodLEAF & \pathSegTargeted{} & 85.55$\pm$1.59 & 86.85$\pm$1.62 & 86.85$\pm$1.62 & 87.12$\pm$1.63 \\
& \methodColdSegDiff & \pathSegTargeted{} & 83.51$\pm$2.13 & 84.69$\pm$2.16 & 84.50$\pm$2.16 & 83.58$\pm$2.15 \\
& \methodCDAL & \pathSegTargeted{} & 87.63$\pm$1.76 & 87.97$\pm$1.78 & 87.57$\pm$1.81 & 87.60$\pm$1.79 \\
\cmidrule(lr){1-7}
\multirow[c]{12}{*}{\textbf{ISIC2018}} & \methodUniSegDiff & LC-Hybrid & 89.27$\pm$1.72 & 89.33$\pm$1.73 & 89.32$\pm$1.73 & 89.40$\pm$1.79 \\
& \methodSSB & LC-Hybrid & 87.07$\pm$2.04 & 87.09$\pm$2.02 & 87.02$\pm$2.01 & 87.11$\pm$2.00 \\
& \methodMedSegDiffVTwo & HC-Hybrid & 83.52$\pm$1.72 & 44.09$\pm$1.53 & 41.67$\pm$1.83 & 85.72$\pm$1.77 \\
& \methodSDSeg & HC-Hybrid & 89.13$\pm$1.53 & 31.34$\pm$1.57 & 57.65$\pm$1.54 & 90.87$\pm$1.56 \\
& \methodTSLDSeg & HC-Hybrid & 90.30$\pm$1.74 & 30.75$\pm$1.79 & 52.40$\pm$1.79 & 90.39$\pm$1.78 \\
& \methodLDSeg & HC-Hybrid & 85.70$\pm$1.77 & 78.97$\pm$1.82 & 81.22$\pm$1.84 & 88.90$\pm$1.80 \\
& \methodEnsemDiff & Diff-Coupled & 89.20$\pm$1.92 & 11.21$\pm$1.93 & 47.86$\pm$1.93 & 87.40$\pm$1.91 \\
& \methodMedSegDiffVOne & Diff-Coupled & 85.44$\pm$1.92 & 41.17$\pm$1.94 & 39.26$\pm$1.93 & 77.88$\pm$1.96 \\
& \methodBerDiff & Diff-Coupled & 90.17$\pm$1.54 & 32.81$\pm$1.57 & 64.21$\pm$1.57 & 90.10$\pm$1.59 \\
& \methodLEAF & \pathSegTargeted{} & 88.44$\pm$1.58 & 88.99$\pm$1.57 & 88.97$\pm$1.57 & 88.52$\pm$1.59 \\
& \methodColdSegDiff & \pathSegTargeted{} & 87.12$\pm$1.64 & 88.34$\pm$1.65 & 89.02$\pm$1.67 & 88.77$\pm$1.67 \\
& \methodCDAL & \pathSegTargeted{} & 88.47$\pm$1.76 & 88.81$\pm$1.85 & 88.72$\pm$1.85 & 88.42$\pm$1.82 \\
\bottomrule
\end{tabular*}
\end{table*}

\begin{table*}[!t]
\centering
\small
\setlength{\tabcolsep}{1.5pt}
\renewcommand{\arraystretch}{0.94}
\caption{Original-method decisions at \(\kappa=1.0\) from ten seed-matched pairs. \(\Delta\)Dice is condition minus Full, and \(\delta_r\) is the seed-level SD of Full. Mean paired \(\Delta\)Dice is reported as \(\bar{\Delta}\pm t_{0.95,9}\mathrm{SE}_{\Delta}\), corresponding to the 90\% CI \([\mathrm{LCB}_{95},\mathrm{UCB}_{95}]\). SR denotes supported state reliance. BerDiff/ACDC was unavailable. pp is percentage points.}
\label{tab:stat_all_datasets}
\begin{tabular*}{\textwidth}{@{\extracolsep{\fill}}l@{\hspace{2pt}}l|c|cc|cc|cc@{}}
\toprule
\multirow{2}{*}{Method} & \multirow{2}{*}{Path Category} & \multirow{2}{*}{\(\delta_r\) (pp)} & \multicolumn{2}{c|}{Random-\(Y_t\)} & \multicolumn{2}{c|}{Shuffle-\(Y_t\)} & \multicolumn{2}{c}{Core-No-Diff} \\
& & & \makecell{Mean \(\Delta\)Dice (pp)} & Decision & \makecell{Mean \(\Delta\)Dice (pp)} & Decision & \makecell{Mean \(\Delta\)Dice (pp)} & Decision \\
\midrule
\multicolumn{9}{@{}l}{\textbf{BTCV}} \\[-0.3ex]
\methodUniSegDiff & LC-Hybrid & 1.86 & $-0.04\pm0.10$ & Preserved & $-0.02\pm0.09$ & Preserved & $0.04\pm0.49$ & Similar \\
\methodSSB & LC-Hybrid & 1.70 & $-0.09\pm0.07$ & Preserved & $-0.07\pm0.08$ & Preserved & $-0.19\pm0.14$ & Similar \\
\methodMedSegDiffVTwo & HC-Hybrid & 1.75 & $-68.94\pm0.62$ & SR & $-70.02\pm0.78$ & SR & $-0.10\pm0.13$ & Similar \\
\methodSDSeg & HC-Hybrid & 1.57 & $-12.87\pm0.06$ & SR & $-7.13\pm0.06$ & SR & $0.00\pm0.08$ & Similar \\
\methodTSLDSeg & HC-Hybrid & 1.60 & $-67.73\pm0.07$ & SR & $-62.11\pm0.06$ & SR & $1.97\pm0.09$ & Better \\
\methodLDSeg & HC-Hybrid & 2.06 & $-44.95\pm0.06$ & SR & $-14.10\pm0.07$ & SR & $11.41\pm0.08$ & Better \\
\methodEnsemDiff & Diff-Coupled & 1.60 & $-86.19\pm0.07$ & SR & $-65.92\pm0.05$ & SR & $-2.23\pm0.09$ & Worse \\
\methodMedSegDiffVOne & Diff-Coupled & 1.78 & $-59.32\pm0.06$ & SR & $-58.53\pm0.07$ & SR & $4.61\pm0.07$ & Better \\
\methodBerDiff & Diff-Coupled & 1.55 & $-73.32\pm0.07$ & SR & $-53.54\pm0.05$ & SR & $0.02\pm0.07$ & Similar \\
\methodLEAF & \pathSegTargeted{} & 1.62 & $0.26\pm0.20$ & Preserved & $0.07\pm0.47$ & Preserved & $-0.05\pm0.08$ & Similar \\
\methodColdSegDiff & \pathSegTargeted{} & 2.11 & $3.56\pm0.65$ & Preserved & $3.35\pm0.65$ & Preserved & $0.58\pm0.64$ & Similar \\
\methodCDAL & \pathSegTargeted{} & 1.54 & $1.64\pm0.06$ & Preserved & $0.96\pm0.05$ & Preserved & $-3.36\pm0.07$ & Worse \\
\cmidrule(lr){1-9}
\multicolumn{9}{@{}l}{\textbf{ACDC}} \\[-0.3ex]
\methodUniSegDiff & LC-Hybrid & 1.72 & $-0.06\pm0.11$ & Preserved & $-0.04\pm0.18$ & Preserved & $0.34\pm0.24$ & Similar \\
\methodSSB & LC-Hybrid & 1.77 & $-0.01\pm0.06$ & Preserved & $-0.10\pm0.12$ & Preserved & $-0.17\pm0.26$ & Similar \\
\methodMedSegDiffVTwo & HC-Hybrid & 1.54 & $-64.59\pm0.56$ & SR & $-64.92\pm0.57$ & SR & $0.06\pm0.08$ & Similar \\
\methodSDSeg & HC-Hybrid & 1.75 & $-77.78\pm0.08$ & SR & $-76.47\pm0.07$ & SR & $6.05\pm0.07$ & Better \\
\methodTSLDSeg & HC-Hybrid & 1.69 & $-74.16\pm0.07$ & SR & $-73.04\pm0.07$ & SR & $8.55\pm0.07$ & Better \\
\methodLDSeg & HC-Hybrid & 1.60 & $-35.33\pm0.05$ & SR & $-23.40\pm0.07$ & SR & $17.39\pm0.06$ & Better \\
\methodEnsemDiff & Diff-Coupled & 1.97 & $-86.81\pm0.40$ & SR & $-71.41\pm0.06$ & SR & $0.03\pm0.09$ & Similar \\
\methodMedSegDiffVOne & Diff-Coupled & 2.11 & $-77.24\pm0.85$ & SR & $-66.65\pm0.51$ & SR & $-9.72\pm0.88$ & Worse \\
\methodBerDiff & Diff-Coupled & - & - & - & - & - & - & - \\
\methodLEAF & \pathSegTargeted{} & 1.59 & $1.30\pm0.07$ & Preserved & $1.30\pm0.06$ & Preserved & $1.58\pm0.11$ & Inconclusive \\
\methodColdSegDiff & \pathSegTargeted{} & 2.13 & $1.18\pm0.05$ & Preserved & $0.99\pm0.05$ & Preserved & $0.07\pm0.09$ & Similar \\
\methodCDAL & \pathSegTargeted{} & 1.76 & $0.34\pm0.15$ & Preserved & $-0.06\pm0.37$ & Preserved & $-0.03\pm0.06$ & Similar \\
\cmidrule(lr){1-9}
\multicolumn{9}{@{}l}{\textbf{ISIC2018}} \\[-0.3ex]
\methodUniSegDiff & LC-Hybrid & 1.72 & $0.06\pm0.16$ & Preserved & $0.05\pm0.16$ & Preserved & $0.13\pm0.17$ & Similar \\
\methodSSB & LC-Hybrid & 2.04 & $0.02\pm0.09$ & Preserved & $-0.05\pm0.18$ & Preserved & $0.04\pm0.12$ & Similar \\
\methodMedSegDiffVTwo & HC-Hybrid & 1.72 & $-39.43\pm0.38$ & SR & $-41.86\pm0.69$ & SR & $2.19\pm0.07$ & Better \\
\methodSDSeg & HC-Hybrid & 1.53 & $-57.79\pm0.08$ & SR & $-31.48\pm0.08$ & SR & $1.75\pm0.10$ & Better \\
\methodTSLDSeg & HC-Hybrid & 1.74 & $-59.55\pm0.08$ & SR & $-37.90\pm0.07$ & SR & $0.09\pm0.12$ & Similar \\
\methodLDSeg & HC-Hybrid & 1.77 & $-6.73\pm0.07$ & SR & $-4.48\pm0.07$ & SR & $3.20\pm0.07$ & Better \\
\methodEnsemDiff & Diff-Coupled & 1.92 & $-77.99\pm0.07$ & SR & $-41.34\pm0.06$ & SR & $-1.81\pm0.18$ & Inconclusive \\
\methodMedSegDiffVOne & Diff-Coupled & 1.92 & $-44.27\pm0.07$ & SR & $-46.18\pm0.06$ & SR & $-7.57\pm0.06$ & Worse \\
\methodBerDiff & Diff-Coupled & 1.54 & $-57.37\pm0.06$ & SR & $-25.96\pm0.07$ & SR & $-0.08\pm0.12$ & Similar \\
\methodLEAF & \pathSegTargeted{} & 1.58 & $0.54\pm0.08$ & Preserved & $0.53\pm0.08$ & Preserved & $0.08\pm0.07$ & Similar \\
\methodColdSegDiff & \pathSegTargeted{} & 1.64 & $1.22\pm0.07$ & Preserved & $1.90\pm0.06$ & Preserved & $1.65\pm0.08$ & Inconclusive \\
\methodCDAL & \pathSegTargeted{} & 1.76 & $0.34\pm0.21$ & Preserved & $0.25\pm0.28$ & Preserved & $-0.05\pm0.08$ & Similar \\
\bottomrule
\end{tabular*}
\end{table*}

Across the 20 available method-dataset settings assigned to HC-Hybrid or Diff-Coupled, all 40 state-audit comparisons met the SR criterion (\tabref{tab:stat_all_datasets}). Even the smallest losses occurred for \methodLDSeg{}/ISIC2018: mean \(\Delta\)Dice was \(-6.73\) pp (90\% CI, \([-6.81,-6.66]\)) under \randyt{} and \(-4.48\) pp (90\% CI, \([-4.55,-4.41]\)) under \shuffleyt{}, against a reference margin of \(\delta_r=1.77\) pp. The \randyt{} results support reliance on target-derived state content under the matched retraining counterfactual, which also includes the Gaussian replacement shift. The concordant \shuffleyt{} results separately support reliance on correct image-state pairing while retaining method-native mask-derived structure.

The bypass-capable group showed the complementary response. Across the 15 LC-Hybrid or \pathSegTargeted{} method-dataset settings, all 30 state-audit comparisons met the Preserved criterion. The most negative mean changes were limited to \(-0.09\) pp under \randyt{} and \(-0.10\) pp under \shuffleyt{}. These decisions support Full-level recoverability after retraining under either state disruption.

The response split was not explained by the reported loss inventory. \methodUniSegDiff{}, \methodSSB{}, \methodMedSegDiffVTwo{}, and LDSeg all belong to the Noise + Mask Loss family. Nevertheless, the two LC-Hybrid methods produced 12 Preserved decisions, whereas the two HC-Hybrid methods produced 12 SR decisions. \methodBerDiff{} also met the SR criterion under both audits despite its Bernoulli formulation. Thus, across the available original-method settings, state-audit response separated completely by Path Category. The corresponding outcomes for \methodLEAF{} and \methodMedSegDiffVOne{} were directionally consistent with the motivating fixed-checkpoint contrast in \figref{fig:motivation_probe}. 

\noindent\textit{Noise-loss removal ablation.}
To further test whether the HC-Hybrid response was attributable to the explicit noise-loss term, we removed this term from the four audited HC-Hybrid methods while retaining their reconstruction-based segmentation supervision. All 24 method-dataset audit comparisons remained SR under \randyt{} and \shuffleyt{} (\tabref{tab:hc_no_noise_perf} and \tabref{tab:hc_no_noise_stat}), showing that, in these audited HC-Hybrid variants, the SR response persists without an explicit noise-loss term.

A secondary magnitude pattern accompanied this categorical separation. Among the 20 state-reliant original method-dataset settings, \shuffleyt{} produced a smaller absolute Dice loss than \randyt{} in 16 settings, directionally consistent with the retained-noise recoverability ordering \(\mathcal{R}_{\mathrm{Shuf}} < \mathcal{R}_{\mathrm{Rand}}\) derived for the analyzed Gaussian noise-prediction setting in Section~\ref{sec:formal_rationale}. Because the two audits are not calibrated to equal intervention strength, we treat this correspondence as descriptive rather than as an ordering of reliance magnitude.

\subsection{Noise-First Rerouting Induces State Reliance}

Noise-first rerouting produced the predicted Preserved-to-SR transition across all five originally bypass-capable methods. \tabref{tab:fix_perf} reports their absolute scores, and \tabref{tab:fix_stat} reports paired audit decisions relative to each variant's Modified Full.

\begin{table*}[!t]
\centering
\small
\renewcommand{\arraystretch}{0.97}
\caption{Noise-first intervention performance on BTCV, ACDC, and ISIC2018. Dice (\%) is reported as mean \(\pm\) SD over ten matched seeds. Original Full and Modified Full are the unmodified and rerouted references. All rerouted variants are treated as HC-Hybrid regardless of their original path categories.}
\label{tab:fix_perf}
\begin{tabular*}{\textwidth}{@{\extracolsep{\fill}}cl|c|c|c|c@{}}
\toprule
Dataset & Method &
\makecell{Original Full Dice} &
\makecell{Modified Full Dice} &
\makecell{Random-$Y_t$ Dice} &
\makecell{Shuffle-$Y_t$ Dice} \\
\midrule
\multirow[c]{5}{*}{\textbf{BTCV}} & \methodUniSegDiff & 85.82$\pm$1.86 & 84.12$\pm$1.31 & 15.06$\pm$1.33 & 19.08$\pm$1.44 \\
& \methodSSB & 84.01$\pm$1.70 & 82.10$\pm$2.00 & 8.81$\pm$2.12 & 13.69$\pm$1.46 \\
& \methodLEAF & 85.65$\pm$1.62 & 84.91$\pm$1.35 & 2.61$\pm$1.59 & 4.53$\pm$1.86 \\
& \methodColdSegDiff & 79.40$\pm$2.11 & 74.73$\pm$1.37 & 6.08$\pm$1.30 & 7.52$\pm$1.27 \\
& \methodCDAL & 81.00$\pm$1.54 & 73.42$\pm$1.35 & 13.95$\pm$1.35 & 18.56$\pm$1.39 \\
\cmidrule(lr){1-6}
\multirow[c]{5}{*}{\textbf{ACDC}} & \methodUniSegDiff & 82.76$\pm$1.72 & 81.19$\pm$1.48 & 7.95$\pm$1.33 & 11.28$\pm$1.28 \\
& \methodSSB & 81.29$\pm$1.77 & 78.49$\pm$2.34 & 18.27$\pm$1.29 & 21.20$\pm$1.09 \\
& \methodLEAF & 85.55$\pm$1.59 & 80.38$\pm$1.34 & 2.56$\pm$1.49 & 4.70$\pm$1.72 \\
& \methodColdSegDiff & 83.51$\pm$2.13 & 73.32$\pm$1.39 & 5.41$\pm$1.43 & 4.23$\pm$1.50 \\
& \methodCDAL & 87.63$\pm$1.76 & 82.18$\pm$1.33 & 14.42$\pm$1.44 & 17.52$\pm$1.30 \\
\cmidrule(lr){1-6}
\multirow[c]{5}{*}{\textbf{ISIC2018}} & \methodUniSegDiff & 89.27$\pm$1.72 & 82.13$\pm$1.76 & 17.29$\pm$1.40 & 17.74$\pm$1.37 \\
& \methodSSB & 87.07$\pm$2.04 & 84.19$\pm$2.32 & 13.98$\pm$1.85 & 15.79$\pm$2.10 \\
& \methodLEAF & 88.44$\pm$1.58 & 90.17$\pm$1.34 & 2.72$\pm$1.66 & 60.68$\pm$1.23 \\
& \methodColdSegDiff & 87.12$\pm$1.64 & 76.08$\pm$1.29 & 9.95$\pm$1.35 & 10.53$\pm$1.28 \\
& \methodCDAL & 88.47$\pm$1.76 & 83.73$\pm$1.32 & 20.04$\pm$1.36 & 26.49$\pm$1.22 \\
\bottomrule
\end{tabular*}
\end{table*}

\begin{table*}[!t]
\centering
\small
\caption{Noise-first intervention decisions at \(\kappa=1.0\) from ten seed-matched pairs. \(\Delta\)Dice is audit condition minus Modified Full, and \(\delta_r\) is the seed-level SD of Modified Full. Mean paired \(\Delta\)Dice is reported as \(\bar{\Delta}\pm t_{0.95,9}\mathrm{SE}_{\Delta}\), corresponding to the 90\% CI \([\mathrm{LCB}_{95},\mathrm{UCB}_{95}]\). SR denotes supported state reliance. All entries correspond to rerouted HC-Hybrid variants.}
\label{tab:fix_stat}
\begin{tabular*}{\textwidth}{@{\extracolsep{\fill}}l|c|cc|cc@{}}
\toprule
\multirow{2}{*}{Method} & \multirow{2}{*}{\(\delta_r\) (pp)} &
\multicolumn{2}{c|}{Random-\(Y_t\)} &
\multicolumn{2}{c}{Shuffle-\(Y_t\)} \\
& & Mean \(\Delta\)Dice (pp) & Decision & Mean \(\Delta\)Dice (pp) & Decision \\
\midrule
\multicolumn{6}{@{}l}{\textbf{BTCV}} \\[-0.3ex]
\methodUniSegDiff & 1.31 & $-69.06\pm0.13$ & SR & $-65.04\pm0.20$ & SR \\
\methodSSB & 2.00 & $-73.28\pm1.53$ & SR & $-68.40\pm1.20$ & SR \\
\methodLEAF & 1.35 & $-82.30\pm0.39$ & SR & $-80.38\pm0.55$ & SR \\
\methodColdSegDiff & 1.37 & $-68.64\pm0.15$ & SR & $-67.20\pm0.16$ & SR \\
\methodCDAL & 1.35 & $-59.46\pm0.10$ & SR & $-54.85\pm0.29$ & SR \\
\cmidrule(lr){1-6}
\multicolumn{6}{@{}l}{\textbf{ACDC}} \\[-0.3ex]
\methodUniSegDiff & 1.48 & $-73.24\pm0.94$ & SR & $-69.91\pm0.98$ & SR \\
\methodSSB & 2.34 & $-60.22\pm1.24$ & SR & $-57.29\pm1.38$ & SR \\
\methodLEAF & 1.34 & $-77.83\pm0.40$ & SR & $-75.68\pm0.73$ & SR \\
\methodColdSegDiff & 1.39 & $-67.91\pm0.16$ & SR & $-69.09\pm0.35$ & SR \\
\methodCDAL & 1.33 & $-67.76\pm0.15$ & SR & $-64.66\pm0.42$ & SR \\
\cmidrule(lr){1-6}
\multicolumn{6}{@{}l}{\textbf{ISIC2018}} \\[-0.3ex]
\methodUniSegDiff & 1.76 & $-64.83\pm0.82$ & SR & $-64.38\pm0.81$ & SR \\
\methodSSB & 2.32 & $-70.21\pm1.77$ & SR & $-68.40\pm1.52$ & SR \\
\methodLEAF & 1.34 & $-87.45\pm0.47$ & SR & $-29.49\pm0.17$ & SR \\
\methodColdSegDiff & 1.29 & $-66.14\pm0.11$ & SR & $-65.55\pm0.09$ & SR \\
\methodCDAL & 1.32 & $-63.69\pm0.13$ & SR & $-57.24\pm0.18$ & SR \\
\bottomrule
\end{tabular*}
\end{table*}

All 30 modified-method comparisons met the SR criterion, whereas the corresponding 30 original-method comparisons in \tabref{tab:stat_all_datasets} were Preserved. Even the smallest modified loss occurred for \methodLEAF{}/ISIC2018 under \shuffleyt{} and was \(-29.49\) pp (90\% CI, \([-29.66,-29.31]\)), compared with a Modified Full margin of \(1.34\) pp. The Preserved-to-SR transition therefore replicated across five methods, three datasets, and both audited state properties. The same magnitude pattern persisted after rerouting, with \shuffleyt{} producing smaller absolute Dice losses than \randyt{} in 14 of 15 settings.

The intervention scope differed across methods. For \methodUniSegDiff{} and \methodSSB{}, the modification relocated existing segmentation supervision from the LC-Hybrid bypass to the reconstruction, providing the tighter within-method test of path coupling. For LEAF, \methodColdSegDiff{}, and \methodCDAL{}, the redesign additionally changed the Directly Predicted Quantity from Mask to Noise and therefore constituted a broader noise-first modification. All five variants nevertheless showed the same categorical transition, with the first two providing the more specific attribution to bypass removal.

Modified Full Dice was lower than Original Full Dice in 14 of the 15 method-dataset settings, with LEAF/ISIC2018 as the sole exception (\tabref{tab:fix_perf}). These Original Full-Modified Full differences are descriptive under the primary analysis. Thus, descriptively, the rerouting produced SR under both audits without improving Full Dice in 14 of 15 settings. 

\subsection{Deterministic Utility Is Distinct from State Reliance}

Deterministic utility did not follow the state-reliance pattern. Across all 35 method-dataset settings, \corenodiff{} was Similar or Better than Full in 28. Among the 20 settings in which both state audits met the SR criterion, \corenodiff{} was Similar or Better in 16, with all nine Better decisions occurred in this state-reliant subset (\tabref{tab:stat_all_datasets}). This pattern shows that native state reliance and deterministic endpoint recoverability are empirically distinct.

All twelve HC-Hybrid \corenodiff{} comparisons were Similar or Better, comprising four Similar and eight Better decisions. Diff-Coupled results were more heterogeneous, with three Similar, one Better, three Worse, and one Inconclusive. Among the 15 bypass-capable settings, \corenodiff{} was Similar in 12, Worse in one, and Inconclusive in two.

Endpoint recoverability also varied by method and dataset. For \methodMedSegDiffVOne{}, for example, \corenodiff{} was Better on BTCV (\(+4.61\) pp) but Worse on ACDC and ISIC2018 (\(-9.72\) and \(-7.57\) pp, respectively).

\noindent\textbf{Margin sensitivity.} Re-evaluating the decisions over \(\kappa\in\{0.5,1.0,1.5,2.0\}\) left all 70 original-method state-audit decisions and all 30 modified-method decisions unchanged. The complete Path Category separation and every intervention-induced Preserved-to-SR transition therefore persisted across the tested fourfold range of margins. Only nine of the 35 \corenodiff{} decisions were tolerance-dependent over the tested range, and all moved toward Similar as the margin widened, with all nine were Similar at \(\kappa=2.0\) (Appendix~\ref{app:margin_sensitivity}, \tabref{tab:margin_sensitivity}).

\section{Discussion and Conclusion}

Taken together, the results separate two attribution questions in fully supervised diffusion segmentation: whether the native formulation relies on the noisy state, and whether diffusion-specific computation provides a deterministic performance advantage over image-only prediction. State reliance followed the supervision path to the evaluated mask. Methods whose supervised routes could bypass noised-quantity reconstruction recovered reference performance under state disruption, whereas coupled routes did not, and rerouting originally bypass-capable methods onto a noise-first route induced the corresponding reliance. The HC-Hybrid noise-loss-removal ablation further showed that this response persisted without an explicit noise-loss term in the audited variants. Deterministic utility followed a different pattern, with image-only counterparts often recovering comparable or better endpoint performance even when the native formulation was state-reliant. The rerouted references also generally did not improve deterministic performance. One plausible explanation is that the original architectures and optimization settings were developed for their native formulations; forcing segmentation supervision through noised-quantity prediction and reconstruction may therefore introduce a mismatch between the model design and the modified objective, particularly in the broader redesigns that also change the directly predicted quantity. State reliance can thus be induced by formulation, but its presence does not by itself establish deterministic benefit from diffusion-specific computation.

These findings point to a broader attribution problem in conditional prediction. A conventional predictor learns a condition-to-target relation \(C\!\to\!Y\), whereas a conditional diffusion formulation additionally supplies a target-derived noisy state, yielding \((C,Y_t,t)\!\to\!Y\). When the condition already supports a strong predictor \(h(C)\), endpoint performance alone establishes neither that the added state is required nor that diffusion-specific computation provides predictive value beyond condition-only prediction. Segmentation instantiates this structure as \(\I\!\to\!\Yzero\), and analogous questions may arise in tasks such as super-resolution, where the conditioning input already supports direct prediction of the target before a noisy target state is introduced. Crucially, condition-only endpoint recoverability does not imply a state-free native computation path: as shown in Section~\ref{sec:formal_rationale}, a reconstruction route may still require a supplied-state-dependent intermediate prediction, even when that dependence cancels in the composed endpoint. The relevant mechanistic object is therefore the complete supervision path to the evaluated output, rather than the nominal presence of diffusion, the directly predicted quantity, or endpoint performance alone.

This perspective refines prior state-construction and image-only-control studies~\cite{fu2023aligning,fu2023recycling,ottl2024diffseg} by treating state reliance and deterministic predictive utility as separate attribution targets. Claims of diffusion-specific contribution should therefore pair mechanism-specific reliance evidence with a matched condition-only counterfactual testing whether the claimed capability remains recoverable without diffusion-specific computation. Our conclusions are limited to fully supervised segmentation under single-reference deterministic endpoint evaluation and do not assess potential benefits for distributional prediction or modeling multiple plausible masks.

\section*{Resource Availability}

\noindent\textbf{Lead contact.}
Further information and requests for resources should be directed to and will
be fulfilled by the lead contact, Hengzhuo Yang
(\href{mailto:yang.hengz@northeastern.edu}{yang.hengz@northeastern.edu}).

\noindent\textbf{Materials availability.}
This study did not generate new physical materials.

\noindent\textbf{Data and code availability.}
The BTCV/Synapse dataset~\cite{landman2015btcv} is available at
\url{https://www.synapse.org/Synapse:syn3193805/wiki/217789}.
The ACDC dataset~\cite{bernard2018acdc} is available at
\url{https://www.creatis.insa-lyon.fr/Challenge/acdc/databases.html}.
The ISIC2018 dataset~\cite{codella2019isic2018} is available at
\url{https://challenge.isic-archive.com/data/#2018}.

Code is available at \url{https://github.com/Hengzhuo-Yang/Rethinking-Diffusion-Segmentation}.

\section*{Acknowledgments}
This research received no specific grant from funding agencies in the public, commercial, or not-for-profit sectors.

\section*{Author Contributions}
\noindent
\textbf{Hengzhuo Yang:} Writing - original draft, Writing - review \& editing, Visualization, Software, Methodology, Conceptualization, Formal analysis.
\textbf{Yuming Zeng:} Writing - review \& editing, Conceptualization, Validation.
\textbf{Yuling Yang:} Writing - review \& editing, Conceptualization, Validation.

\section*{Declaration of Interests}
The authors declare no competing interests.

\section*{Declaration of generative AI and AI-assisted technologies in the writing process}
During the preparation of this work, the authors used ChatGPT (OpenAI) 
to improve the language and readability of the manuscript. 
After using this tool, the authors reviewed and edited the content as needed 
and take full responsibility for the content of the published article.

\bibliographystyle{IEEEtran}
\bibliography{refs}

\appendices

\section{Detailed Supervision-Path Structure}
\label{app:taxonomy_details}

This appendix expands the operational assignment procedure in \figref{fig:path_category_assignment}. Path Category is determined by the documented loss-supported routes to the evaluated mask. \figref{fig:prediction_supervision_routes} shows the detailed prediction and supervision structures represented by the four categories, including the locations of segmentation-related supervision relative to noised-quantity reconstruction. 

\begin{figure*}[!t]
\centering
\safeincludegraphics[width=0.7\textwidth]{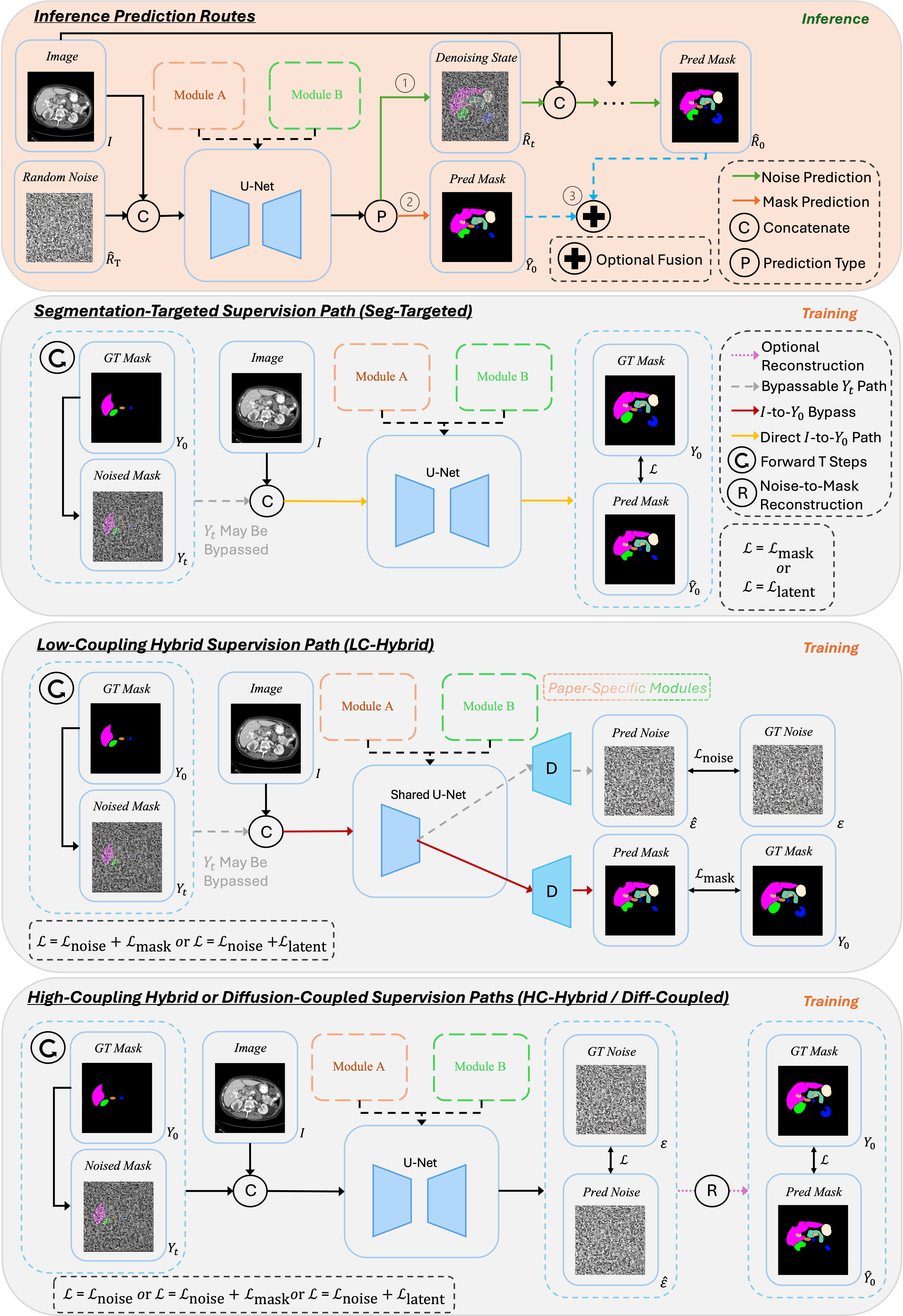}
\caption{Detailed prediction and supervision routes underlying the four Path Categories. The inference panel traces the predicted quantities used to obtain the evaluated mask, whereas the training panels locate segmentation-related supervision relative to noised-quantity reconstruction. Either route may be embedded in an iterative inference method.}
\label{fig:prediction_supervision_routes}
\end{figure*}

\section{Experimental Setup and Data Splits}
\label{app:experimental_setup}

Every run used one NVIDIA GeForce RTX 5090. We followed each method's released
implementation where available and its reported training and inference
configuration otherwise. Memory-driven reductions in per-device batch size
were held fixed across Full and its matched conditions. No audit-specific
tuning was performed.

All conditions within an implementation used the same fixed partition.
BTCV/Synapse~\cite{landman2015btcv} followed the TransUNet 18/12-volume
split~\cite{chen2024transunet}. We fixed two held-out volumes (295 slices) for
checkpoint selection and used the remaining 10 volumes (1,273 slices) for
evaluation, yielding an 18/2/10 case-level partition with 2,211 training
slices. ACDC~\cite{bernard2018acdc} used a 70/10/20-subject split with
1,304/182/416 training, validation, and test slices. ISIC2018~\cite{codella2019isic2018}
used the official 2,594/100/1,000-image split. The released binary-mask
formulation of \methodBerDiff{} was incompatible with the native four-class
ACDC task, so that setting was unavailable.

\section{Formal Analysis}
\label{app:path_hypothesis_proofs}

This appendix supplies the assumptions and proofs for
Section~\ref{sec:formal_rationale}. The audit conditions are defined in
Section~\ref{sec:audit_protocol}. The argument proceeds in four stages. The
first formalizes the image-only solution used by a conventional
non-diffusion segmentor and its invariance when represented within a
diffusion-style segmentation-output predictor class. The second compares the
information available for predicting the independently sampled noise target.
The third characterizes the supplied-state dependence required for a Gaussian \(\epsilon\)-prediction reconstruction to realize an image-only endpoint. The fourth determines when retained-noise correctness and clean-target reconstruction are compatible. Each result has a different role in the route-level hypotheses. Together, they provide population or pointwise statements that motivate the route-level predictions, while empirical Dice outcomes and audit decisions are determined separately by finite retraining.

Throughout this appendix, \(\Yzero\) denotes the clean target associated with
\(\I\) for a sampled case and is not a reverse-time state. The timestep \(t\)
indexes the diffusion coefficients, the supplied state \(\Yt^{(c)}\), and the
corresponding timestep-conditioned predictor call. The population Bayes-risk
statements for noise recovery average over the joint law including \(t\),
whereas the reconstruction identities hold pointwise for each realized \(t\)
with \(a_t,b_t>0\). The analysis concerns the training-time state constructions
defined in Section~\ref{sec:audit_protocol} and does not model the sequence of
model-generated states in a multistep reverse trajectory.

\subsection{Non-Diffusion Segmentation Reference and Audit Invariance}

Let \(\tau_m(\Yzero)\) denote the segmentation target used in this analysis,
either the clean mask in pixel space or a deterministic encoding of that mask
in a latent segmentation space. Let \(\ell_m\) be a nonnegative loss defined
on the corresponding segmentation-output space, and let \(\mathcal H_I\)
denote a class of measurable image-only predictors taking values in that
space. For \(h\in\mathcal H_I\), define
\begin{equation}
\begin{aligned}
    \mathcal R_{\mathrm{img}}(h)
    &:=
    \mathbb E\!\left[
      \ell_m\!\left(h(\I),\tau_m(\Yzero)\right)
    \right],\\
    \mathcal R_{\mathrm{img}}^\star
    &:=
    \inf_{h\in\mathcal H_I}
    \mathcal R_{\mathrm{img}}(h).
\end{aligned}
    \label{eq:app_image_only_segmentation_risk}
\end{equation}
This is the population segmentation risk minimized by a conventional
non-diffusion segmentor over image-only predictors. The next proposition gives
the exact image-only solution for this reference under the realizable
abstraction used later for Gaussian noise recovery.

\begin{proposition}[Image-only segmentation optimum]
\label{prop:app_image_only_optimum}
Assume that \(\Yzero=\phi(\I)\) almost surely for a measurable map \(\phi\),
that \(h_0=\tau_m\circ\phi\) belongs to \(\mathcal H_I\), and that
\(\ell_m(z,z)=0\) for every segmentation target \(z\). Then
\begin{equation}
    \mathcal R_{\mathrm{img}}(h_0)
    =
    \mathcal R_{\mathrm{img}}^\star
    =0.
    \label{eq:app_image_only_optimum}
\end{equation}
\end{proposition}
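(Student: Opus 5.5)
The plan is a direct two-sided squeeze: show \(\mathcal R_{\mathrm{img}}(h_0)=0\), then show every element of \(\mathcal H_I\) has nonnegative risk. Then \(0\le \mathcal R_{\mathrm{img}}^\star\le \mathcal R_{\mathrm{img}}(h_0)=0\), which gives both equalities in \eqref{eq:app_image_only_optimum}.

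\textbf{Step 1: the risk of \(h_0\) vanishes.} On the almost-sure event \(\{\Yzero=\phi(\I)\}\),
\[
h_0(\I)=\tau_m(\phi(\I))=\tau_m(\Yzero).
\]
Since \(\ell_m(z,z)=0\) for every segmentation target \(z\), the integrand \(\ell_m(h_0(\I),\tau_m(\Yzero))\) is zero almost surely. Its expectation is therefore \(0\). Measurability of this integrand follows because \(\phi\) and \(\tau_m\) are measurable, so \(h_0\) is measurable and the composition with \(\ell_m\) is a random variable. Here I assume \(\ell_m\) is jointly measurable, which is implicit in the risk being defined.

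\textbf{Step 2: the infimum is bounded below and above.} For any \(h\in\mathcal H_I\), nonnegativity of \(\ell_m\) gives \(\mathcal R_{\mathrm{img}}(h)\ge 0\), possibly \(+\infty\). Hence \(\mathcal R_{\mathrm{img}}^\star\ge 0\). Because \(h_0\in\mathcal H_I\) by assumption, \(\mathcal R_{\mathrm{img}}^\star\le\mathcal R_{\mathrm{img}}(h_0)=0\). Combining the two bounds closes the squeeze.

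\textbf{Main point of care.} There is no real obstacle; the only subtlety is the almost-sure qualifier. The identity \(h_0(\I)=\tau_m(\Yzero)\) holds only off a null set, and that set does not affect the expectation. I would also note explicitly that \(\ell_m(z,z)=0\) is needed only at targets \(z\) in the range of \(\tau_m\).
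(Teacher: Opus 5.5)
Your proposal is correct and matches the paper's proof: the almost-sure identity \(h_0(\I)=\tau_m(\Yzero)\) together with \(\ell_m(z,z)=0\) gives \(\mathcal R_{\mathrm{img}}(h_0)=0\), and nonnegativity of \(\ell_m\) bounds the infimum below by zero, so \(h_0\) attains it. Your measurability remarks are extra care that the paper leaves implicit. Note that measurability of \(h_0\) already follows from \(h_0\in\mathcal H_I\), so you need not separately assert that \(\tau_m\) is measurable.
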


\begin{proof}
The realizable assumption gives
\[
    h_0(\I)
    =
    \tau_m\!\left(\phi(\I)\right)
    =
    \tau_m(\Yzero)
    \quad\text{almost surely}.
\]
Substituting this equality into
Eq.~\eqref{eq:app_image_only_segmentation_risk} yields
\[
    \mathcal R_{\mathrm{img}}(h_0)
    =
    \mathbb E\!\left[
      \ell_m\!\left(
        \tau_m(\Yzero),
        \tau_m(\Yzero)
      \right)
    \right]
    =0.
\]
Because \(\ell_m\) is nonnegative, no predictor can have risk below zero.
Therefore \(h_0\) attains the global infimum and
\(\mathcal R_{\mathrm{img}}^\star=0\).
\end{proof}

The zero-risk conclusion requires realizability and an exact-target loss. The
audit-invariance result below is broader and holds for any fixed image-only
candidate whose risk is finite.

For a complete segmentation-output predictor \(g\), define its segmentation
risk under condition \(c\in\{\mathrm{Full},\mathrm{Rand},\mathrm{Shuf}\}\) by
\begin{equation}
    \mathcal R_m^{(c)}(g)
    =
    \mathbb E_c\!\left[
      \ell_m\!\left(
        g(\I,\Yt^{(c)},t),
        \tau_m(\Yzero)
      \right)
    \right].
    \label{eq:app_segmentation_risk}
\end{equation}
Here \(\mathbb E_c\) denotes expectation under the condition-specific supplied
state and training tuple. The audits change that state while preserving the
joint law of \((\I,\Yzero)\). Equation~\eqref{eq:app_segmentation_risk} is an
analytical segmentation-risk quantity and need not itself appear as a term in
every method's training objective.

\begin{lemma}[Image-only segmentation-risk invariance]
\label{lem:app_image_only_invariance}
Assume that the complete segmentation-output predictor class in every condition contains
a common realization \(g_h\) satisfying
\[
    g_h(\I,\Yt^{(c)},t)=h(\I)
\]
for a fixed \(h\in\mathcal H_I\) and every condition \(c\). Then
\begin{equation}
\begin{aligned}
    \mathcal R_m^{(\mathrm{Full})}(g_h)
    &={}
    \mathcal R_m^{(\mathrm{Rand})}(g_h)
    =
    \mathcal R_m^{(\mathrm{Shuf})}(g_h)\\
    &={}
    \mathbb E\!\left[
      \ell_m\!\left(h(\I),\tau_m(\Yzero)\right)
    \right].
\end{aligned}
    \label{eq:app_image_only_invariance}
\end{equation}
\end{lemma}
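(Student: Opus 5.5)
The plan is to substitute the defining identity of \(g_h\) inside each expectation and then use the one structural fact the audits guarantee: each condition \(c\) changes only the supplied state \(\Yt^{(c)}\), while the joint law of \((\I,\Yzero)\) is the same as in Full. The lemma needs no realizability and no zero-risk conclusion, so Proposition~\ref{prop:app_image_only_optimum} is not invoked.

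First, I fix a condition \(c\in\{\mathrm{Full},\mathrm{Rand},\mathrm{Shuf}\}\) and start from Eq.~\eqref{eq:app_segmentation_risk}. By hypothesis, \(g_h(\I,\Yt^{(c)},t)=h(\I)\) holds identically, hence in particular \(\mathbb P_c\)-almost surely. So the integrand equals \(\ell_m(h(\I),\tau_m(\Yzero))\) almost surely, and
\[
\mathcal R_m^{(c)}(g_h)=\mathbb E_c\!\left[\ell_m\!\left(h(\I),\tau_m(\Yzero)\right)\right].
\]
The integrand is now a fixed measurable function \(F(\I,\Yzero)=\ell_m(h(\I),\tau_m(\Yzero))\) of the pair \((\I,\Yzero)\) alone. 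It is measurable because \(h\), \(\tau_m\), and \(\ell_m\) are, and it is nonnegative because \(\ell_m\) is. The expectation is therefore well defined in \([0,\infty]\), and no integrability assumption is needed.

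Second, I check that the marginal law of \((\I,\Yzero)\) under \(\mathbb E_c\) equals the reference law in each audit, reading directly from the constructions in Section~\ref{sec:audit_protocol}. In Random-\(Y_t\), Eq.~\eqref{eq:randyt}, the state is replaced by independent noise \(\etaNoise^{(i)}\), and the tuple \((\I,\Yzero,t)\) is drawn as in Full. In Shuffle-\(Y_t\), Eq.~\eqref{eq:shuffleyt}, the image, the supervision target \(\Yzero^{(i)}\), and the timestep stay attached to case \(i\); only the donor \(\Yzero^{(j)}\) enters the state, and it never enters the loss. Since an expectation of a function of \((\I,\Yzero)\) depends only on the law of that pair, all three risks equal \(\mathbb E[\ell_m(h(\I),\tau_m(\Yzero))]\), which gives Eq.~\eqref{eq:app_image_only_invariance}.

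The only delicate step is the Shuffle case. There the random donor permutation \(\pi\) is resampled per batch and could, in principle, be read as perturbing the sampling of case \(i\). I would handle this by stating explicitly that \(\pi\) is chosen independently of case \(i\)'s own \((\I^{(i)},\Yzero^{(i)})\) in the sense that matters here. The audit preserves the joint law of \((\I,\Yzero)\), as asserted just before the lemma, so conditioning on \(\pi\) and then averaging leaves the law of the supervised pair unchanged. This completes the argument.
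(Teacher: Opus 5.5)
Your proposal is correct and matches the paper's proof: substitute \(g_h(\I,\Yt^{(c)},t)=h(\I)\) into the condition-specific risk, note that the integrand depends only on \((\I,\Yzero)\), and use that each audit preserves the joint law of that pair. Your remarks on measurability, nonnegativity of \(\ell_m\), and the Shuffle donor permutation are harmless refinements; the paper covers them simply by invoking its stated law-preservation property.
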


\begin{proof}
Fix a condition \(c\). Substituting the assumed form of \(g_h\) into
Eq.~\eqref{eq:app_segmentation_risk} gives
\begin{align*}
    \mathcal R_m^{(c)}(g_h)
    &=
    \mathbb E_c\!\left[
      \ell_m\!\left(
        h(\I),
        \tau_m(\Yzero)
      \right)
    \right].
\end{align*}
The integrand depends only on \((\I,\Yzero)\). Its distribution is the same
under Full, Random, and Shuffle, so
\[
    \mathbb E_c\!\left[
      \ell_m\!\left(h(\I),\tau_m(\Yzero)\right)
    \right]
    =
    \mathbb E\!\left[
      \ell_m\!\left(h(\I),\tau_m(\Yzero)\right)
    \right]
\]
for every \(c\), which proves the equality.
\end{proof}

Taking \(h=h_0\) from Proposition~\ref{prop:app_image_only_optimum} makes the
common risk in Eq.~\eqref{eq:app_image_only_invariance} equal to zero whenever
all three complete segmentation-output predictor classes contain the corresponding
realization. For any other \(h\), the lemma still shows equality of its
segmentation risk across the audits.

The lemma is category-neutral. It establishes availability of a common
image-only solution within the diffusion-style segmentation-output predictor class and says
nothing about whether the documented objective supports it or finite
optimization selects it. Path Category provides that additional
supervision-path information. In \pathSegTargeted{} and in the
segmentation-supervised bypass of LC-Hybrid, segmentation supervision can
support this solution without first requiring an accurate noised-quantity
prediction.
Diff-Coupled has no corresponding segmentation loss, while HC-Hybrid has no
segmentation-supervised route that reaches the evaluated mask without first
requiring an accurate noised-quantity prediction. Either category may still represent the
same solution through an intermediate noised-quantity output. The
representation and its compatibility with noised-quantity supervision are
analyzed after the noise-risk result.

\subsection{Retained-Noise Information under Gaussian Prediction}

The remaining results specialize to Gaussian \(\epsilon\)-prediction with
squared-error loss. Treat the mask, state, and noise tensors as vectors in
\(\mathbb R^d\). Let \(\mathbf I_{\mathrm{id}}\) denote the corresponding
\(d\times d\) identity matrix. The risks below use unnormalized
\(\ell_2^2\); coordinate-averaged squared error divides every displayed risk
by \(d\).

All random variables are defined on a common probability space. Assume
\(\mathbb E\|\Yzero\|^2<\infty\),
\(\eps\sim\mathcal N(\mathbf 0,\mathbf I_{\mathrm{id}})\) independently of
\((\I,\Yzero,t)\), and a finite diffusion schedule. The coefficients
\(a_t,b_t>0\) are deterministic functions of the observed \(t\) and are
therefore bounded over this schedule. Exact recovery under Full additionally
uses the realizable assumption \(\Yzero=\phi(\I)\) almost surely for a
measurable \(\phi\).

For Random-\(Y_t\), let
\(\etaNoise\sim\mathcal N(\mathbf 0,\mathbf I_{\mathrm{id}})\) be independent
of \((\I,\Yzero,t,\eps)\). For Shuffle-\(Y_t\), let \(\Yzero'\) have the same
marginal distribution as \(\Yzero\) and be independent of
\((\I,\Yzero,t,\eps,\etaNoise)\). Shuffle retains the current \(\eps\) in the
supplied state and also retains it as the supervision target. The three states
are
\begin{equation}
\begin{aligned}
    \Yt^{(\mathrm{Full})}
    &=a_t\Yzero+b_t\eps,\\
    \Yt^{(\mathrm{Rand})}
    &=\etaNoise,\\
    \Yt^{(\mathrm{Shuf})}
    &=a_t\Yzero'+b_t\eps.
\end{aligned}
    \label{eq:app_audit_states}
\end{equation}

Before comparing these conditions, the next lemma states the common
squared-error calculation used for every risk. Let \(X\) be any random input,
and let the infimum below range over measurable predictors \(f\) satisfying
\(\mathbb E\|f(X)\|^2<\infty\):
\begin{equation}
    \mathcal R_{\eps}(X)
    :=
    \inf_f
    \mathbb E\|f(X)-\eps\|^2.
    \label{eq:app_general_noise_risk}
\end{equation}

\begin{lemma}[Squared-error projection]
\label{lem:app_squared_error_projection}
Let \(m_X=\mathbb E[\eps\mid X]\). Then \(m_X\) attains the infimum in
Eq.~\eqref{eq:app_general_noise_risk}, and
\begin{equation}
\begin{aligned}
    \mathbb E\|f(X)-\eps\|^2
    ={}&
    \mathbb E\|f(X)-m_X\|^2
    +
    \mathbb E\|m_X-\eps\|^2,\\
    \mathcal R_{\eps}(X)
    ={}&
    \mathbb E\!\left[
      \operatorname{tr}\operatorname{Var}(\eps\mid X)
    \right],\\
    d
    ={}&
    \mathcal R_{\eps}(X)+\mathbb E\|m_X\|^2.
\end{aligned}
    \label{eq:app_projection}
\end{equation}
\end{lemma}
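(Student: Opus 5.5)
The plan is the standard $L^2$ orthogonal-projection argument. First we check that $m_X=\mathbb E[\eps\mid X]$ is well defined and is an admissible competitor. Since $\eps\sim\mathcal N(\mathbf 0,\mathbf I_{\mathrm{id}})$, we have $\mathbb E\|\eps\|^2=d<\infty$. Conditional Jensen then gives $\mathbb E\|m_X\|^2\le\mathbb E\|\eps\|^2=d$, so $m_X$ is a measurable function of $X$ with finite second moment and lies in the class over which the infimum in Eq.~\eqref{eq:app_general_noise_risk} is taken.

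For the first identity, fix an admissible $f$ and write $f(X)-\eps=(f(X)-m_X)+(m_X-\eps)$. Expanding the square gives the two displayed terms plus the cross term $2\,\mathbb E\langle f(X)-m_X,\,m_X-\eps\rangle$. The cross term is integrable, because by Cauchy--Schwarz both factors are in $L^2$. We will kill it with the tower property. Condition on $X$; since $f(X)-m_X$ is $\sigma(X)$-measurable, we get
\[
\mathbb E\bigl[\langle f(X)-m_X,\,m_X-\eps\rangle\mid X\bigr]=\langle f(X)-m_X,\,m_X-\mathbb E[\eps\mid X]\rangle=0 .
\]
Some care is needed to pull the factor out of the conditional expectation under integrability alone; one can do this coordinatewise, or by truncating on $\{\|f(X)-m_X\|\le n\}$ and passing to the limit with dominated convergence. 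This justification is the only genuinely delicate step.

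From the decomposition, $\mathbb E\|f(X)-\eps\|^2\ge\mathbb E\|m_X-\eps\|^2$, with equality at $f=m_X$. Hence $m_X$ attains the infimum and $\mathcal R_{\eps}(X)=\mathbb E\|m_X-\eps\|^2$. Writing this coordinatewise and conditioning on $X$ gives $\mathbb E\bigl[\sum_k\operatorname{Var}(\epsilon_k\mid X)\bigr]=\mathbb E[\operatorname{tr}\operatorname{Var}(\eps\mid X)]$, which is the second identity.

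The third identity is the decomposition applied with $f\equiv\mathbf 0$, which is admissible. The left side becomes $\mathbb E\|\eps\|^2=\operatorname{tr}\mathbf I_{\mathrm{id}}=d$. The right side becomes $\mathbb E\|m_X\|^2+\mathcal R_{\eps}(X)$. This gives $d=\mathcal R_{\eps}(X)+\mathbb E\|m_X\|^2$ as claimed.
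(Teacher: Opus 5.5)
Your proposal is correct and follows essentially the same route as the paper: decompose \(f(X)-\eps\) around \(m_X\), kill the cross term by conditioning on \(X\), read off the minimizer and the conditional-variance form, and substitute \(f\equiv\mathbf 0\) to get the energy identity. The extra justifications you add make explicit steps the paper states only briefly: conditional Jensen for the admissibility of \(m_X\), integrability of the cross term, and pulling the \(\sigma(X)\)-measurable factor out of the conditional expectation.
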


\begin{proof}
For any admissible predictor \(f\), write
\[
    f(X)-\eps
    =
    \bigl(f(X)-m_X\bigr)+\bigl(m_X-\eps\bigr).
\]
The first term is a measurable function of \(X\). The second has conditional
mean zero because
\[
    \mathbb E[m_X-\eps\mid X]
    =m_X-\mathbb E[\eps\mid X]
    =\mathbf 0.
\]
The cross term in the squared expansion therefore satisfies
\begin{align*}
&\mathbb E\!\left[
  \bigl(f(X)-m_X\bigr)^\top
  \bigl(m_X-\eps\bigr)
\right]\\
&\quad=
\mathbb E\!\left[
  \bigl(f(X)-m_X\bigr)^\top
  \mathbb E[m_X-\eps\mid X]
\right]
=0.
\end{align*}
Expanding the squared norm now gives the first identity in
Eq.~\eqref{eq:app_projection}. Its first term is nonnegative and vanishes for
\(f(X)=m_X\) almost surely. Conditional expectation preserves square
integrability, so \(m_X\) is admissible and attains the infimum.

For a vector target,
\[
    \mathbb E\!\left[
      \|\eps-m_X\|^2\mid X
    \right]
    =
    \operatorname{tr}\operatorname{Var}(\eps\mid X).
\]
Taking expectations gives the second identity. Finally, substitute the zero
predictor \(f(X)=\mathbf 0\) into the first identity and use
\(\mathbb E\|\eps\|^2=\operatorname{tr}(\mathbf I_{\mathrm{id}})=d\). This
gives the last identity, which separates total noise energy into the part
unpredictable from \(X\) and the part captured by the conditional mean.
\end{proof}

The first information comparison uses only the image and timestep versus the
matched Full inputs.

\begin{proposition}[Image-timestep and Full noise risks]
\label{prop:app_full_information}
Let \(X_I=(\I,t)\) and
\(X_{\mathrm{Full}}=(\I,\Yt^{(\mathrm{Full})},t)\). Under the assumptions
above, including \(\Yzero=\phi(\I)\) almost surely,
\begin{equation}
    \mathcal R_{\eps}(X_I)=d,
    \qquad
    \mathcal R_{\eps}(X_{\mathrm{Full}})=0.
    \label{eq:app_full_information}
\end{equation}
\end{proposition}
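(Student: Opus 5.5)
Both identities follow from Lemma~\ref{lem:app_squared_error_projection} by computing the conditional mean \(m_X=\mathbb E[\eps\mid X]\) for each input.

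\textbf{Image-timestep risk.} Take \(X=X_I=(\I,t)\). By assumption \(\eps\) is independent of \((\I,\Yzero,t)\), hence of \(X_I\). Therefore \(m_{X_I}=\mathbb E[\eps]=\mathbf 0\) almost surely and \(\operatorname{Var}(\eps\mid X_I)=\mathbf I_{\mathrm{id}}\). The second identity in Eq.~\eqref{eq:app_projection} then gives \(\mathcal R_{\eps}(X_I)=\operatorname{tr}\mathbf I_{\mathrm{id}}=d\). Equivalently, the third identity gives \(d=\mathcal R_{\eps}(X_I)+0\).

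\textbf{Full risk.} Here I exhibit an explicit predictor with zero risk; nonnegativity of the risk then gives \(\mathcal R_{\eps}(X_{\mathrm{Full}})=0\). Define
\[
    f^\star(\I,\mathbf y,t)=\frac{\mathbf y-a_t\phi(\I)}{b_t}.
\]
This is measurable, since \(\phi\) is measurable, \(a_t,b_t\) are deterministic functions of \(t\), and \(b_t>0\). Using \(\Yzero=\phi(\I)\) almost surely and \(\Yt^{(\mathrm{Full})}=a_t\Yzero+b_t\eps\),
\[
    f^\star(X_{\mathrm{Full}})=\frac{a_t\Yzero+b_t\eps-a_t\phi(\I)}{b_t}=\eps
\]
almost surely. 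Hence \(\mathbb E\|f^\star(X_{\mathrm{Full}})-\eps\|^2=0\).

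\textbf{Admissibility.} It remains to check that \(f^\star\) lies in the admissible class, i.e. \(\mathbb E\|f^\star(X_{\mathrm{Full}})\|^2<\infty\). This is immediate from the almost-sure equality \(f^\star(X_{\mathrm{Full}})=\eps\) and \(\mathbb E\|\eps\|^2=d\). Alternatively, one can bound it directly using \(\mathbb E\|\Yzero\|^2<\infty\) together with the fact that \(a_t\) and \(1/b_t\) are bounded over a finite schedule. Either way, \(f^\star\) is admissible, and the infimum equals zero. Consistently with the lemma, \(m_{X_{\mathrm{Full}}}=\eps\) and the conditional variance vanishes.

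The only point needing care is this admissibility and measurability bookkeeping: the infimum is restricted to square-integrable measurable predictors, so one must confirm \(f^\star\) belongs to that class rather than only noting that it matches \(\eps\). The finite schedule with \(b_t>0\) is what makes the division by \(b_t\) harmless. Once that is confirmed, both identities in Eq.~\eqref{eq:app_full_information} follow.
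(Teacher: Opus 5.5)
Your proof is correct and follows the paper's argument: independence of \(\eps\) from \((\I,t)\) gives \(\mathbb E[\eps\mid X_I]=\mathbf 0\) and hence risk \(d\), and realizability makes \(\eps=(\Yt^{(\mathrm{Full})}-a_t\phi(\I))/b_t\) a measurable function of \(X_{\mathrm{Full}}\). The only cosmetic difference is that you present this function as an explicit admissible zero-risk predictor, whereas the paper appeals to the vanishing conditional variance in Lemma~\ref{lem:app_squared_error_projection}.
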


\begin{proof}
Independence of \(\eps\) from \((\I,t)\), together with
\(\mathbb E[\eps]=\mathbf 0\), gives
\[
    \mathbb E[\eps\mid X_I]=\mathbf 0.
\]
Lemma~\ref{lem:app_squared_error_projection} therefore yields
\[
    \mathcal R_{\eps}(X_I)
    =
    \mathbb E\|\eps\|^2
    =d.
\]

For the matched Full inputs, realizability gives
\begin{equation}
    \eps
    =
    \frac{\Yt^{(\mathrm{Full})}-a_t\Yzero}{b_t}
    =
    \frac{\Yt^{(\mathrm{Full})}-a_t\phi(\I)}{b_t}.
    \label{eq:app_full_noise_recovery}
\end{equation}
The final expression is a measurable function of
\((\I,\Yt^{(\mathrm{Full})},t)\). Hence \(\eps\) is determined by
\(X_{\mathrm{Full}}\), its conditional variance given these inputs is zero,
and Lemma~\ref{lem:app_squared_error_projection} gives
\(\mathcal R_{\eps}(X_{\mathrm{Full}})=0\).
\end{proof}

Proposition~\ref{prop:app_full_information} separates the segmentation target from the
realized noise target. The image may determine \(\Yzero\) under the realizable
abstraction while remaining uninformative about the independently sampled
\(\eps\). The matched Full state supplies the additional relation needed to
recover that realization exactly.

The next proposition compares the complete inputs supplied under the three
audit conditions.

\begin{proposition}[Audit-state noise risks]
\label{prop:app_gaussian_risks}
For \(c\in\{\mathrm{Full},\mathrm{Rand},\mathrm{Shuf}\}\), let
\(X_c=(\I,\Yt^{(c)},t)\) and
\(\mathcal R_c=\mathcal R_{\eps}(X_c)\). Then
\begin{equation}
\begin{aligned}
    \mathcal R_{\mathrm{Full}}=0
    &\le
    \mathcal R_{\mathrm{Shuf}}
    <
    \mathcal R_{\mathrm{Rand}}=d,\\
    \mathcal R_{\mathrm{Shuf}}
    &=
    \mathbb E\!\left[
      \frac{a_t^2}{b_t^2}
      \operatorname{tr}
      \operatorname{Var}
      (\Yzero'\mid X_{\mathrm{Shuf}})
    \right].
\end{aligned}
    \label{eq:app_audit_risks}
\end{equation}
If the displayed conditional-covariance trace is positive on an event of
positive probability, then \(\mathcal R_{\mathrm{Shuf}}>0\). This condition
means that the donor mask is not always determined by the Shuffle inputs.
\end{proposition}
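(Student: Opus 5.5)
The plan is to compute each risk from Lemma~\ref{lem:app_squared_error_projection}, using the conditional mean $m_{X_c}=\mathbb E[\eps\mid X_c]$ in each case. The two endpoint values are quick.

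\emph{Full.} $\mathcal R_{\mathrm{Full}}=0$ is exactly the second identity of Proposition~\ref{prop:app_full_information}.

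\emph{Random.} Since $\etaNoise$ is independent of $(\I,\Yzero,t,\eps)$, the pair $(\eps,(\I,t))$ is independent of $\etaNoise$. Also $\eps$ is independent of $(\I,t)$. Hence $\eps$ is independent of $X_{\mathrm{Rand}}=(\I,\etaNoise,t)$, so $m_{X_{\mathrm{Rand}}}=\mathbf 0$. The last identity of Lemma~\ref{lem:app_squared_error_projection} then gives $\mathcal R_{\mathrm{Rand}}=d$, as in the first half of Proposition~\ref{prop:app_full_information}. The bound $0\le\mathcal R_{\mathrm{Shuf}}$ holds because the risk is nonnegative.

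\emph{Shuffle formula.} I would invert the Shuffle state: $\eps=(\Yt^{(\mathrm{Shuf})}-a_t\Yzero')/b_t$. Because $t$ is a coordinate of $X_{\mathrm{Shuf}}$ and $a_t,b_t$ are deterministic functions of $t$, the quantities $\Yt^{(\mathrm{Shuf})}$, $a_t$ and $b_t$ are all $X_{\mathrm{Shuf}}$-measurable. Therefore
\[
m_{X_{\mathrm{Shuf}}}=\bigl(\Yt^{(\mathrm{Shuf})}-a_t\,\mathbb E[\Yzero'\mid X_{\mathrm{Shuf}}]\bigr)/b_t ,
\]
and the residual is
\[
\eps-m_{X_{\mathrm{Shuf}}}=-(a_t/b_t)\bigl(\Yzero'-\mathbb E[\Yzero'\mid X_{\mathrm{Shuf}}]\bigr).
\]
Conditional expectations are well defined because $\mathbb E\|\Yzero'\|^2=\mathbb E\|\Yzero\|^2<\infty$. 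Pulling the measurable factor $a_t^2/b_t^2$ outside the conditional expectation gives
\[
\operatorname{tr}\operatorname{Var}(\eps\mid X_{\mathrm{Shuf}})=(a_t^2/b_t^2)\operatorname{tr}\operatorname{Var}(\Yzero'\mid X_{\mathrm{Shuf}}).
\]
The second identity of Lemma~\ref{lem:app_squared_error_projection} then yields the displayed formula. The positivity claim follows directly: $a_t^2/b_t^2>0$, so if the integrand is positive on an event of positive probability, its expectation is positive.

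\emph{Strict inequality.} This is the main obstacle. By the last identity of Lemma~\ref{lem:app_squared_error_projection}, $\mathcal R_{\mathrm{Shuf}}<d$ is equivalent to $\mathbb E\|m_{X_{\mathrm{Shuf}}}\|^2>0$. I would prove this by contradiction with a covariance argument that avoids computing $m_{X_{\mathrm{Shuf}}}$ explicitly.

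Suppose $m_{X_{\mathrm{Shuf}}}=\mathbf 0$ almost surely. The state $\Yt^{(\mathrm{Shuf})}$ is square integrable, since $a_t,b_t$ are bounded and $\Yzero'$ and $\eps$ are square integrable. The tower property then gives
\[
\mathbb E\bigl[\eps^\top\Yt^{(\mathrm{Shuf})}\bigr]=\mathbb E\bigl[m_{X_{\mathrm{Shuf}}}^\top\Yt^{(\mathrm{Shuf})}\bigr]=0 .
\]
Now compute the same quantity directly. Since $\eps$ is mean zero and independent of $(\Yzero',t)$, we have $\mathbb E[\eps^\top a_t\Yzero']=0$. The remaining term is $\mathbb E[b_t\|\eps\|^2]=\mathbb E[b_t]\,d$, again by independence of $\eps$ and $t$. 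This is strictly positive because $b_t>0$, which contradicts the tower-property value of zero. Hence $\mathcal R_{\mathrm{Shuf}}<d=\mathcal R_{\mathrm{Rand}}$, which completes the ordering.
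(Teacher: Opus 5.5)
Your proposal is correct and follows the paper's proof essentially step by step: Full comes from Proposition~\ref{prop:app_full_information}, Random from independence and the projection lemma, the Shuffle formula from the conditional-mean residual, and $\mathcal R_{\mathrm{Shuf}}<d$ from the same contradiction via the correlation between $\eps$ and $\Yt^{(\mathrm{Shuf})}$. The only differences are cosmetic: you use the unconditional scalar $\mathbb E[\eps^\top\Yt^{(\mathrm{Shuf})}]=d\,\mathbb E[b_t]>0$ where the paper uses the conditional matrix identity $\mathbb E[\eps(\Yt^{(\mathrm{Shuf})})^\top\mid t]=b_t\mathbf I_{\mathrm{id}}$, and you spell out the joint-independence and integrability details a bit more carefully.
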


\begin{proof}
The Full equality is the second result in
Proposition~\ref{prop:app_full_information}.

For Random, \(X_{\mathrm{Rand}}=(\I,\etaNoise,t)\). The retained target
\(\eps\) is independent of all three components, so
\(\mathbb E[\eps\mid X_{\mathrm{Rand}}]=\mathbf 0\). The projection lemma
then gives
\[
    \mathcal R_{\mathrm{Rand}}
    =
    \mathbb E\|\eps\|^2
    =d.
\]
Thus the independent Gaussian state supplies no information about the retained
noise beyond the image and timestep.

For Shuffle, rearranging the third state in
Eq.~\eqref{eq:app_audit_states} gives
\[
    \eps
    =
    \frac{\Yt^{(\mathrm{Shuf})}}{b_t}
    -
    \frac{a_t}{b_t}\Yzero'.
\]
Conditioning both sides on \(X_{\mathrm{Shuf}}\) yields the optimal predictor
\begin{equation}
    m_{\mathrm{Shuf}}
    :=
    \mathbb E[\eps\mid X_{\mathrm{Shuf}}]
    =
    \frac{\Yt^{(\mathrm{Shuf})}}{b_t}
    -
    \frac{a_t}{b_t}
    \mathbb E[\Yzero'\mid X_{\mathrm{Shuf}}].
    \label{eq:app_shuffle_conditional_mean}
\end{equation}
Because \(t\) is a component of \(X_{\mathrm{Shuf}}\), the ratio
\(a_t/b_t\) is fixed under this conditioning.
Subtracting the original expression for \(\eps\) gives its prediction
residual:
\[
    m_{\mathrm{Shuf}}-\eps
    =
    \frac{a_t}{b_t}
    \left(
      \Yzero'-\mathbb E[\Yzero'\mid X_{\mathrm{Shuf}}]
    \right).
\]
Squaring, taking expectations, and using the conditional-variance identity in
Lemma~\ref{lem:app_squared_error_projection} gives the Shuffle expression in
Eq.~\eqref{eq:app_audit_risks}. It is nonnegative and is strictly positive
under the stated donor-uncertainty condition.

It remains to prove the strict upper bound
\(\mathcal R_{\mathrm{Shuf}}<d\). The last identity in
Eq.~\eqref{eq:app_projection} gives
\[
    d
    =
    \mathcal R_{\mathrm{Shuf}}
    +
    \mathbb E\|m_{\mathrm{Shuf}}\|^2.
\]
Equality \(\mathcal R_{\mathrm{Shuf}}=d\) would therefore require
\(m_{\mathrm{Shuf}}=\mathbf 0\) almost surely. Because
\(\Yt^{(\mathrm{Shuf})}\) and \(t\) are components of
\(X_{\mathrm{Shuf}}\), the law of iterated expectations would then give
\begin{align*}
    \mathbb E\!\left[
      \eps\bigl(\Yt^{(\mathrm{Shuf})}\bigr)^\top\mid t
    \right]
    &=
    \mathbb E\!\left[
      \mathbb E[\eps\mid X_{\mathrm{Shuf}}]
      \bigl(\Yt^{(\mathrm{Shuf})}\bigr)^\top\mid t
    \right]\\
    &=\mathbf 0.
\end{align*}
Direct calculation from Eq.~\eqref{eq:app_audit_states} instead gives
\begin{align*}
    \mathbb E\!\left[
      \eps\bigl(\Yt^{(\mathrm{Shuf})}\bigr)^\top\mid t
    \right]
    &=
    a_t\mathbb E[\eps{\Yzero'}^\top\mid t]
    +
    b_t\mathbb E[\eps\eps^\top\mid t]\\
    &=b_t\mathbf I_{\mathrm{id}}\ne\mathbf 0.
\end{align*}
The donor-noise term is zero by independence, and the second moment of
\(\eps\) is \(\mathbf I_{\mathrm{id}}\). The result is nonzero because
\(b_t>0\), contradicting the assumed equality. Hence
\(\mathcal R_{\mathrm{Shuf}}<d=\mathcal R_{\mathrm{Rand}}\), which completes
the ordering.
\end{proof}

Proposition~\ref{prop:app_gaussian_risks} compares retained-noise
recoverability. Full supplies enough information for exact recovery, Random
supplies none, and Shuffle supplies some because it retains the current
\(\eps\). These risks do not measure endpoint performance or order the two
audits by Dice loss or finite-training effect size.

The proposition uses an independently drawn donor. The experiment instead
reshuffles donors without replacement within each batch while excluding
self-matches. The proposition is therefore an independent-donor idealization
of the implemented Shuffle audit.

\subsection{Gaussian Endpoint Representation and Reconstruction Compatibility}

The Bayes risks above concern the intermediate output supervised against the
retained \(\eps\). A different question is whether the complete
noise-to-mask route can represent an image-to-mask solution. The next
proposition separates these two output levels.

\begin{lemma}[Supplied-state dependence of image-only endpoint realization]
\label{prop:app_image_only_reconstruction}

Fix a timestep \(t\) with \(a_t,b_t>0\), and let
\(h:\mathcal I\to\mathbb R^d\) be a measurable clean-mask predictor.
Suppose that, for the same \((\I,t)\), two supplied states
\(\mathbf y\) and \(\widetilde{\mathbf y}\) satisfy
\[
    \frac{\mathbf y-b_t\psi(\I,\mathbf y,t)}{a_t}
    =
    \frac{\widetilde{\mathbf y}
    -b_t\psi(\I,\widetilde{\mathbf y},t)}{a_t}
    =
    h(\I).
\]
Then
\begin{equation}
    \psi(\I,\mathbf y,t)
    -
    \psi(\I,\widetilde{\mathbf y},t)
    =
    \frac{\mathbf y-\widetilde{\mathbf y}}{b_t}.
    \label{eq:app_state_dependence}
\end{equation}
Hence, if \(\mathbf y\neq\widetilde{\mathbf y}\), the required intermediate
outputs are distinct, and no supplied-state-invariant intermediate of the form
\(v(\I,t)\) can realize the same endpoint \(h(\I)\) at both states.
\end{lemma}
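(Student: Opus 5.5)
The argument is pure algebra at a fixed pair \((\I,t)\): the reconstruction map is affine and invertible in the intermediate output. I will first solve each hypothesis for \(\psi\) and then subtract.

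\textbf{Step 1: solve each reconstruction equation.} Multiply the first hypothesis equality by \(a_t>0\) to get \(\mathbf y-b_t\psi(\I,\mathbf y,t)=a_t h(\I)\). Since \(b_t>0\), this rearranges to
\[
\psi(\I,\mathbf y,t)=\frac{\mathbf y-a_t h(\I)}{b_t}.
\]
This is the explicit form already quoted in Section~\ref{sec:formal_rationale}. The same manipulation applied to \(\widetilde{\mathbf y}\) gives
\[
\psi(\I,\widetilde{\mathbf y},t)=\frac{\widetilde{\mathbf y}-a_t h(\I)}{b_t}.
\]

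\textbf{Step 2: subtract.} Taking the difference of the two expressions, the common term \(a_t h(\I)/b_t\) cancels because \((\I,t)\), and hence \(h(\I)\) and the coefficients, are identical in both. What remains is exactly Eq.~\eqref{eq:app_state_dependence}.

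\textbf{Step 3: draw the consequences.} If \(\mathbf y\neq\widetilde{\mathbf y}\), then \((\mathbf y-\widetilde{\mathbf y})/b_t\neq\mathbf 0\), so the two required intermediates are distinct. For the non-invariance claim, argue by contradiction. Suppose \(\psi(\I,\cdot,t)=v(\I,t)\) realizes \(h(\I)\) at both states. Then the left side of Eq.~\eqref{eq:app_state_dependence} is zero, which forces \(\mathbf y=\widetilde{\mathbf y}\) and contradicts the assumption.

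None of these steps is a real obstacle. The only point that needs care is bookkeeping: I will state explicitly that \(a_t\) and \(b_t\) are nonzero so both divisions are legitimate. I will also note that measurability of \(h\) plays no role in this pointwise identity and matters only when the lemma is later used inside expectations.
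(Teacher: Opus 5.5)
Your proposal is correct and takes essentially the same approach as the paper, which subtracts the two reconstruction equalities to get \(\mathbf y-\widetilde{\mathbf y}=b_t[\psi(\I,\mathbf y,t)-\psi(\I,\widetilde{\mathbf y},t)]\) and divides by \(b_t>0\). Solving each equation for \(\psi\) before subtracting, as you do, is only a reordering of the same algebra. Your explicit contradiction argument for the state-invariant intermediate \(v(\I,t)\) spells out a consequence the paper leaves implicit.
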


\begin{proof}
Subtracting the two reconstruction equalities gives
\[
    \mathbf y-\widetilde{\mathbf y}
    =
    b_t\!\left[
      \psi(\I,\mathbf y,t)-\psi(\I,\widetilde{\mathbf y},t)
    \right].
\]
Division by \(b_t>0\) yields
Eq.~\eqref{eq:app_state_dependence}.
\end{proof}

The lemma characterizes the intermediate required for structural representability. We next ask whether a clean-target-correct intermediate can simultaneously match the retained noise target.

\begin{proposition}[Retained-noise and clean-target compatibility]
\label{prop:app_reconstruction_compatibility}

For any \(c\in\{\mathrm{Full},\mathrm{Rand},\mathrm{Shuf}\}\), define
\begin{equation}
    \widehat{\Yzero}^{\,\mathrm{rec},(c)}
    =
    \frac{\Yt^{(c)}-b_t\widehat{\eps}^{(c)}}{a_t}.
    \label{eq:app_reconstruction_setup}
\end{equation}
Relative to the Full construction
\(\Yt^{(\mathrm{Full})}=a_t\Yzero+b_t\eps\), the reconstruction error relative
to the clean target is
\begin{equation}
    \widehat{\Yzero}^{\,\mathrm{rec},(c)}-\Yzero
    =
    \frac{\Yt^{(c)}-\Yt^{(\mathrm{Full})}}{a_t}
    +
    \frac{b_t}{a_t}
    \left(\eps-\widehat{\eps}^{(c)}\right).
    \label{eq:app_reconstruction_error}
\end{equation}
Consequently,
\begin{equation}
\begin{aligned}
    \widehat{\eps}^{(c)}=\eps
    &\quad\Longrightarrow\quad
    \widehat{\Yzero}^{\,\mathrm{rec},(c)}-\Yzero
    =
    \frac{\Yt^{(c)}-\Yt^{(\mathrm{Full})}}{a_t},\\
    \widehat{\Yzero}^{\,\mathrm{rec},(c)}=\Yzero
    &\quad\Longrightarrow\quad
    \widehat{\eps}^{(c)}
    =
    \eps+
    \frac{\Yt^{(c)}-\Yt^{(\mathrm{Full})}}{b_t}.
\end{aligned}
    \label{eq:app_reconstruction_implications}
\end{equation}
Therefore, the retained-noise-correct choice
\(\widehat{\eps}^{(c)}=\eps\) also reconstructs the clean target exactly if and
only if
\(\Yt^{(c)}=\Yt^{(\mathrm{Full})}\).

\end{proposition}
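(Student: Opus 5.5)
The argument is pure algebra, done pointwise for each realized \(t\) with \(a_t,b_t>0\). No probabilistic input is needed beyond the Full construction \(\Yt^{(\mathrm{Full})}=a_t\Yzero+b_t\eps\) from Eq.~\eqref{eq:app_audit_states}.

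The first step is to derive the error identity in Eq.~\eqref{eq:app_reconstruction_error}. Solving the Full construction for the clean target gives \(\Yzero=(\Yt^{(\mathrm{Full})}-b_t\eps)/a_t\). Subtracting this from the definition of \(\widehat{\Yzero}^{\,\mathrm{rec},(c)}\) in Eq.~\eqref{eq:app_reconstruction_setup} gives
\[
\widehat{\Yzero}^{\,\mathrm{rec},(c)}-\Yzero=\frac{\Yt^{(c)}-\Yt^{(\mathrm{Full})}}{a_t}+\frac{b_t}{a_t}\bigl(\eps-\widehat{\eps}^{(c)}\bigr),
\]
which is exactly the claimed identity. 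Everything else follows from this single equation.

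The two implications in Eq.~\eqref{eq:app_reconstruction_implications} come from specializing the identity.
\begin{itemize}
\item Setting \(\widehat{\eps}^{(c)}=\eps\) kills the second term, leaving only the state-mismatch term.
\item Setting the left side to zero and multiplying by \(a_t/b_t\) gives \(\eps-\widehat{\eps}^{(c)}=-(\Yt^{(c)}-\Yt^{(\mathrm{Full})})/b_t\). Rearranging yields the stated shifted intermediate.
\end{itemize}

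For the final equivalence, assume \(\widehat{\eps}^{(c)}=\eps\). By the first implication, the reconstruction error equals \((\Yt^{(c)}-\Yt^{(\mathrm{Full})})/a_t\). Because \(a_t>0\), this error vanishes if and only if \(\Yt^{(c)}=\Yt^{(\mathrm{Full})}\).

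As a sanity check, the main-text Shuffle corollary follows by substituting \(\Yt^{(\mathrm{Shuf})}-\Yt^{(\mathrm{Full})}=a_t(\Yzero'-\Yzero)\). This recovers both the donor-mask reconstruction and the shift \((a_t/b_t)(\Yzero'-\Yzero)\). For Random, the mismatch is \(\etaNoise-a_t\Yzero-b_t\eps\), which is almost surely nonzero because it has a Gaussian component.

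The only step that needs care is the final equivalence. It must be read as a statement about the particular retained-noise-correct choice, not about arbitrary \(\widehat{\eps}^{(c)}\). I would state it explicitly as holding pointwise for each realization, so that no almost-sure qualifiers are needed. I would also note that division by \(a_t\) and \(b_t\) is legitimate under the standing positivity assumption.
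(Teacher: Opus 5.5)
Your proposal is correct and matches the paper's proof: substitute \(\Yzero=(\Yt^{(\mathrm{Full})}-b_t\eps)/a_t\) into the reconstruction definition to get the error identity, then specialize that identity to obtain both implications and the final equivalence. Your reading of the equivalence as a statement about the fixed choice \(\widehat{\eps}^{(c)}=\eps\), with \(a_t>0\) justifying the cancellation, is slightly more precise than the paper's phrasing that both equalities ``hold simultaneously'' if and only if the states agree.
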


\begin{proof}
Substituting
\[
    \Yzero
    =
    \frac{\Yt^{(\mathrm{Full})}-b_t\eps}{a_t}
\]
into Eq.~\eqref{eq:app_reconstruction_setup} gives
Eq.~\eqref{eq:app_reconstruction_error}. The two implications follow by
setting \(\widehat{\eps}^{(c)}=\eps\) and
\(\widehat{\Yzero}^{\,\mathrm{rec},(c)}=\Yzero\), respectively. Hence both
equalities hold simultaneously if and only if
\(\Yt^{(c)}=\Yt^{(\mathrm{Full})}\).
\end{proof}

The three conditions make these requirements explicit. Under Full,
\[
    \widehat{\Yzero}^{\,\mathrm{rec},(\mathrm{Full})}-\Yzero
    =
    \frac{b_t}{a_t}
    \left(\eps-\widehat{\eps}^{(\mathrm{Full})}\right).
\]
Thus, under the native construction, exact retained-noise prediction and exact
clean-target reconstruction are equivalent.

Under Random, the state mismatch is
\(\etaNoise-\Yt^{(\mathrm{Full})}\). Exact prediction of the retained noise
gives
\begin{equation}
    \widehat{\eps}^{(\mathrm{Rand})}=\eps
    \quad\Longrightarrow\quad
    \widehat{\Yzero}^{\,\mathrm{rec},(\mathrm{Rand})}
    =
    \frac{\etaNoise-b_t\eps}{a_t},
    \label{eq:app_random_exact_noise}
\end{equation}
whereas exact clean-target reconstruction requires
\begin{equation}
\begin{aligned}
    \widehat{\eps}^{(\mathrm{Rand})}
    &=
    \frac{\etaNoise-a_t\Yzero}{b_t}\\
    &=
    \eps+
    \frac{\etaNoise-\Yt^{(\mathrm{Full})}}{b_t}.
\end{aligned}
    \label{eq:app_random_clean_target}
\end{equation}
Conditional on \((\Yzero,t)\), the difference
\(\etaNoise-\Yt^{(\mathrm{Full})}\) has a nondegenerate Gaussian
distribution under the stated assumptions. Hence the Random state equals its
Full counterpart only on a probability-zero event, and the two exactness
requirements almost surely differ.

Under Shuffle, the state mismatch is
\(a_t(\Yzero'-\Yzero)\). Therefore
\begin{equation}
\begin{aligned}
    \widehat{\eps}^{(\mathrm{Shuf})}=\eps
    &\quad\Longrightarrow\quad
    \widehat{\Yzero}^{\,\mathrm{rec},(\mathrm{Shuf})}=\Yzero',\\
    \widehat{\Yzero}^{\,\mathrm{rec},(\mathrm{Shuf})}=\Yzero
    &\quad\Longrightarrow\quad
    \widehat{\eps}^{(\mathrm{Shuf})}
    =
    \eps+
    \frac{a_t}{b_t}(\Yzero'-\Yzero).
\end{aligned}
    \label{eq:app_shuffle_condition_specific}
\end{equation}
When \(\Yzero'\neq\Yzero\), exact retained-noise prediction and exact
clean-target reconstruction cannot both hold.

This identity is pointwise. In a Diff-Coupled route, a retained-noise-correct
output reconstructs the clean quantity implied by the audited state. For an HC-Hybrid route that combines reconstruction-based segmentation supervision with explicit retained-noise supervision, the retained-noise and clean-target requirements conflict whenever the state mismatch is nonzero; the tradeoff selected by a weighted finite-training objective remains method- and optimization-dependent.

\subsection{Formal Scope and Empirical Interpretation}
\label{app:theory_experiment_boundary}

The formal results motivate the route-level hypotheses by connecting
image-only representability, objective alignment, retained-noise
recoverability, and reconstruction compatibility. Under realizability,
Proposition~\ref{prop:app_image_only_optimum} and
Lemma~\ref{lem:app_image_only_invariance} establish an audit-invariant
image-only solution when that solution is representable within the complete
segmentation-output predictor class. Path Category then determines whether
the documented objective provides a segmentation-supervised route to this
solution without first requiring an accurate noised-quantity prediction.
This gives the structural basis for the Preserved hypothesis in LC-Hybrid
and \pathSegTargeted{} routes.

For Gaussian \(\epsilon\)-prediction with squared-error noise supervision,
Propositions~\ref{prop:app_full_information} and
\ref{prop:app_gaussian_risks} characterize retained-noise recoverability
under the audit constructions, while
Propositions~\ref{prop:app_image_only_reconstruction} and
\ref{prop:app_reconstruction_compatibility} characterize the
supplied-state-dependent intermediate and its compatibility with
clean-target reconstruction. Together with the documented supervision
paths, these results provide the formal basis for the SR hypothesis in
Diff-Coupled routes and in HC-Hybrid routes whose segmentation supervision
is applied to the analyzed reconstruction. Preserved and SR themselves are
empirical decisions determined by finite retraining, held-out Dice, and the
reference-variability margin.

The direct formal scope concerns the training-time supplied-state
constructions analyzed above. For multistep methods, the model-generated
reverse-chain state at inference is not identified samplewise with the Full
training state at the same nominal timestep, and additional objectives,
fusion mechanisms, and auxiliary conditioning remain method-specific. For a
deterministic clean latent \(\Zzero=T(\Yzero)\), the segmentation-risk and
Gaussian reconstruction arguments extend when the corresponding latent
assumptions hold. Bernoulli, score, velocity, posterior, and other prediction
targets or variational objectives require target- and loss-specific analyses;
their inclusion in the experiments therefore tests whether the route-level
response pattern extends beyond the Gaussian setting analyzed formally.

\section{Noise-Loss Removal and Margin Sensitivity}
\begin{table*}[!t]
\label{app:margin_sensitivity}
\centering
\small
\renewcommand{\arraystretch}{0.97}
\caption{Noise-loss-removal ablation performance for the four audited methods natively classified as HC-Hybrid. The noise loss was removed while all other components and settings were held fixed. Dice (\%) is reported as mean \(\pm\) SD over ten matched seeds.}
\label{tab:hc_no_noise_perf}
\begin{tabular*}{\textwidth}{@{\extracolsep{\fill}}cl|c|c|c|c@{}}
\toprule
Dataset & Method &
\makecell{Original Full Dice} &
\makecell{No-Noise-Loss Full Dice} &
\makecell{Random-$Y_t$ Dice} &
\makecell{Shuffle-$Y_t$ Dice} \\
\midrule
\multirow[c]{4}{*}{\textbf{BTCV}} & \methodMedSegDiffVTwo & 81.19$\pm$1.75 & 80.52$\pm$1.15 & 11.05$\pm$1.72 & 10.22$\pm$1.34 \\
& \methodLDSeg & 64.26$\pm$2.06 & 63.03$\pm$1.98 & 19.06$\pm$1.71 & 51.01$\pm$1.55 \\
& \methodSDSeg & 90.56$\pm$1.57 & 89.02$\pm$1.18 & 76.98$\pm$1.42 & 81.46$\pm$1.39 \\
& \methodTSLDSeg & 82.01$\pm$1.60 & 82.87$\pm$1.85 & 13.84$\pm$2.03 & 18.84$\pm$1.51 \\
\cmidrule(lr){1-6}
\multirow[c]{4}{*}{\textbf{ACDC}} & \methodMedSegDiffVTwo & 72.25$\pm$1.54 & 69.85$\pm$1.37 & 6.96$\pm$1.17 & 7.29$\pm$1.24 \\
& \methodLDSeg & 62.65$\pm$1.60 & 62.52$\pm$1.50 & 24.87$\pm$1.20 & 35.18$\pm$1.42 \\
& \methodSDSeg & 80.85$\pm$1.75 & 79.31$\pm$1.13 & 4.14$\pm$1.17 & 4.27$\pm$1.16 \\
& \methodTSLDSeg & 77.03$\pm$1.69 & 77.52$\pm$1.33 & 3.82$\pm$1.47 & 3.82$\pm$1.44 \\
\cmidrule(lr){1-6}
\multirow[c]{4}{*}{\textbf{ISIC2018}} & \methodMedSegDiffVTwo & 83.52$\pm$1.72 & 81.65$\pm$1.51 & 43.60$\pm$1.49 & 42.77$\pm$1.39 \\
& \methodLDSeg & 85.70$\pm$1.77 & 83.38$\pm$1.33 & 71.65$\pm$1.44 & 77.41$\pm$1.11 \\
& \methodSDSeg & 89.13$\pm$1.53 & 88.48$\pm$1.11 & 29.60$\pm$1.29 & 52.55$\pm$1.06 \\
& \methodTSLDSeg & 90.30$\pm$1.74 & 89.32$\pm$1.07 & 29.39$\pm$1.41 & 49.75$\pm$1.70 \\
\bottomrule
\end{tabular*}
\end{table*}

\begin{table*}[!t]
\centering
\small
\setlength{\tabcolsep}{3pt}
\caption{Noise-loss-removal ablation decisions at \(\kappa=1.0\) from ten seed-matched pairs. \(\Delta\)Dice is audit condition minus No-Noise-Loss Full, and \(\delta_r\) is the seed-level SD of No-Noise-Loss Full. Mean paired \(\Delta\)Dice is reported as \(\bar{\Delta}\pm t_{0.95,9}\mathrm{SE}_{\Delta}\), corresponding to the 90\% CI \([\mathrm{LCB}_{95},\mathrm{UCB}_{95}]\). SR denotes supported state reliance.}
\label{tab:hc_no_noise_stat}
\begin{tabular*}{\textwidth}{@{\extracolsep{\fill}}l|c|cc|cc@{}}
\toprule
\multirow{2}{*}{Method} & \multirow{2}{*}{\(\delta_r\) (pp)} &
\multicolumn{2}{c|}{Random-\(Y_t\)} &
\multicolumn{2}{c}{Shuffle-\(Y_t\)} \\
& & Mean \(\Delta\)Dice (pp) & Decision & Mean \(\Delta\)Dice (pp) & Decision \\
\midrule
\multicolumn{6}{@{}l}{\textbf{BTCV}} \\[-0.3ex]
\methodMedSegDiffVTwo & 1.15 & $-69.47\pm1.15$ & SR & $-70.30\pm0.91$ & SR \\
\methodLDSeg & 1.98 & $-43.97\pm1.24$ & SR & $-12.02\pm1.37$ & SR \\
\methodSDSeg & 1.18 & $-12.04\pm0.31$ & SR & $-7.56\pm0.33$ & SR \\
\methodTSLDSeg & 1.85 & $-69.03\pm0.16$ & SR & $-64.03\pm0.43$ & SR \\
\cmidrule(lr){1-6}
\multicolumn{6}{@{}l}{\textbf{ACDC}} \\[-0.3ex]
\methodMedSegDiffVTwo & 1.37 & $-62.89\pm1.20$ & SR & $-62.56\pm1.24$ & SR \\
\methodLDSeg & 1.50 & $-37.65\pm1.05$ & SR & $-27.34\pm0.98$ & SR \\
\methodSDSeg & 1.13 & $-75.17\pm1.03$ & SR & $-75.04\pm0.98$ & SR \\
\methodTSLDSeg & 1.33 & $-73.70\pm0.33$ & SR & $-73.69\pm0.32$ & SR \\
\cmidrule(lr){1-6}
\multicolumn{6}{@{}l}{\textbf{ISIC2018}} \\[-0.3ex]
\methodMedSegDiffVTwo & 1.51 & $-38.05\pm1.33$ & SR & $-38.88\pm1.26$ & SR \\
\methodLDSeg & 1.33 & $-11.73\pm0.46$ & SR & $-5.97\pm0.27$ & SR \\
\methodSDSeg & 1.11 & $-58.88\pm0.40$ & SR & $-35.94\pm0.59$ & SR \\
\methodTSLDSeg & 1.07 & $-59.94\pm1.01$ & SR & $-39.57\pm1.31$ & SR \\
\bottomrule
\end{tabular*}
\end{table*}

\begin{table*}[!t]
\centering
\small
\caption{Comparisons whose decisions change under \(\delta_r(\kappa)=\kappa\,\mathrm{SD}_{r}\). Boldface marks the primary analysis at \(\kappa=1.0\).}
\label{tab:margin_sensitivity}
\begin{tabular*}{\textwidth}{@{\extracolsep{\fill}}ll|l|cccc@{}}
\toprule
Dataset & Method & Audit & 0.5 & 1.0 & 1.5 & 2.0 \\
\midrule
BTCV & \methodTSLDSeg & \corenodiff{} & Better & \textbf{Better} & Similar & Similar \\
BTCV & \methodEnsemDiff & \corenodiff{} & Worse & \textbf{Worse} & Similar & Similar \\
BTCV & \methodColdSegDiff & \corenodiff{} & Inconclusive & \textbf{Similar} & Similar & Similar \\
ACDC & \methodLEAF & \corenodiff{} & Better & \textbf{Inconclusive} & Similar & Similar \\
ISIC2018 & \methodMedSegDiffVTwo & \corenodiff{} & Better & \textbf{Better} & Similar & Similar \\
ISIC2018 & \methodSDSeg & \corenodiff{} & Better & \textbf{Better} & Similar & Similar \\
ISIC2018 & \methodLDSeg & \corenodiff{} & Better & \textbf{Better} & Better & Similar \\
ISIC2018 & \methodEnsemDiff & \corenodiff{} & Worse & \textbf{Inconclusive} & Similar & Similar \\
ISIC2018 & \methodColdSegDiff & \corenodiff{} & Better & \textbf{Inconclusive} & Similar & Similar \\
\bottomrule
\end{tabular*}
\end{table*}

\end{document}